\documentclass{article}

\usepackage{microtype}
\usepackage{graphicx}
\usepackage{subcaption}
\usepackage{booktabs} 
\usepackage{multirow}
\usepackage{pifont}
\usepackage{makecell}
\usepackage{caption}
\usepackage{subcaption}
\usepackage[T1]{fontenc}
\usepackage{amssymb}
\usepackage{rotating}
\usepackage{enumitem}
\usepackage[skins,breakable]{tcolorbox}

\usepackage{hyperref}

\usepackage[preprint]{icml2026}

\usepackage{amsmath}
\usepackage{amssymb}
\usepackage{mathtools}
\usepackage{amsthm}
\usepackage[table]{xcolor}

\usepackage[capitalize,noabbrev]{cleveref}

\theoremstyle{plain}
\newtheorem{theorem}{Theorem}[section]
\newtheorem{proposition}[theorem]{Proposition}

\theoremstyle{definition}

\theoremstyle{remark}

\usepackage[textsize=tiny]{todonotes}

\icmltitlerunning{Reasoning as State Transition: A Representational Analysis of Reasoning Evolution in LLMs}

\begin{document}

\twocolumn[
  \icmltitle{Reasoning as State Transition: A Representational Analysis of Reasoning Evolution in Large Language Models}



  \icmlsetsymbol{equal}{*}

  \begin{icmlauthorlist}
    \icmlauthor{Siyuan Zhang}{cs}
    \icmlauthor{Jialian Li}{cs}
    \icmlauthor{Yichi Zhang}{cs}
    \icmlauthor{Xiao Yang}{cs}
    \icmlauthor{Yinpeng Dong}{ailab}
    \icmlauthor{Hang Su}{cs}
  \end{icmlauthorlist}

  \icmlaffiliation{cs}{Dept. of Comp. Sci. and Tech., Institute for AI, Tsinghua-Bosch Joint ML Center, THBI Lab, BNRist Center, Tsinghua University, Beijing 100084, China}
  \icmlaffiliation{ailab}{College of AI, Tsinghua University, Beijing 100084, China}

  \icmlcorrespondingauthor{Yinpeng Dong}{dongyinpeng@mail.tsinghua.edu.cn}
  \icmlcorrespondingauthor{Hang Su}{suhangss@mail.tsinghua.edu.cn}

  \icmlkeywords{Large Language Model, Reasoning, Representation, ICML}

  \vskip 0.3in
]



\printAffiliationsAndNotice{}  

\begin{abstract}
  Large Language Models have achieved remarkable performance on reasoning tasks, motivating research into how this ability evolves during training. Prior work has primarily analyzed this evolution via explicit generation outcomes, treating the reasoning process as a black box and obscuring internal changes. To address this opacity, we introduce a representational perspective to investigate the dynamics of the model's internal states. Through comprehensive experiments across models at various training stages, we discover that post-training yields only limited improvement in static initial representation quality. Furthermore, we reveal that, distinct from non-reasoning tasks, reasoning involves a significant continuous distributional shift in representations during generation. Comparative analysis indicates that post-training empowers models to drive this transition toward a better distribution for task solving. To clarify the relationship between internal states and external outputs, statistical analysis confirms a high correlation between generation correctness and the final representations; while counterfactual experiments identify the semantics of the generated tokens, rather than additional computation during inference or intrinsic parameter differences, as the dominant driver of the transition. Collectively, we offer a novel understanding of the reasoning process and the effect of training on reasoning enhancement, providing valuable insights for future model analysis and optimization.
  \vspace{-3ex}
\end{abstract}

\section{Introduction}
\vspace{-1ex}
Large Language Models (LLMs) \cite{jaech2024openai, guo2025deepseek} have demonstrated remarkable performance on reasoning tasks requiring multi-hop analysis and complex calculations \cite{yue2025does, wu2025invisible}. Through targeted post-training techniques, specifically Reinforcement Learning (RL) \cite{shao2024deepseekmath, zeng2025simplerl, zheng2025group} and distillation \cite{guo2025deepseek, openr1, tian2025deepdistill}, these models acquire the capacity to generate long Chain-of-Thought (CoT) sequences \cite{wei2022chain}. This capability enables models to effectively decompose and solve problems that remain challenging for base models \cite{li2025llms, wu2025ctrls}. And such advancements have motivated research into the underlying mechanisms of how reasoning evolves during training \cite{zhao2025echo, matsutani2025rl}.

Prior research has primarily analyzed this evolution from the perspective of the explicit \textit{generation} process, focusing on the improvement in response accuracy \cite{yue2025does, shao2025spurious, zhang2025interplay} or the emergence of special CoT patterns \cite{li2025llms, matsutani2025rl, wang2025beyond}. However, by relying solely on external behaviors, these approaches treat reasoning as a black box, overlooking the dynamics of the model's internal states. Consequently, they fail to provide a deeper and mechanistic explanation of how reasoning functions and how reasoning ability fundamentally develops through training. 

\begin{figure*}[ht]
  \begin{center}
    \centerline{\includegraphics[width=\textwidth]{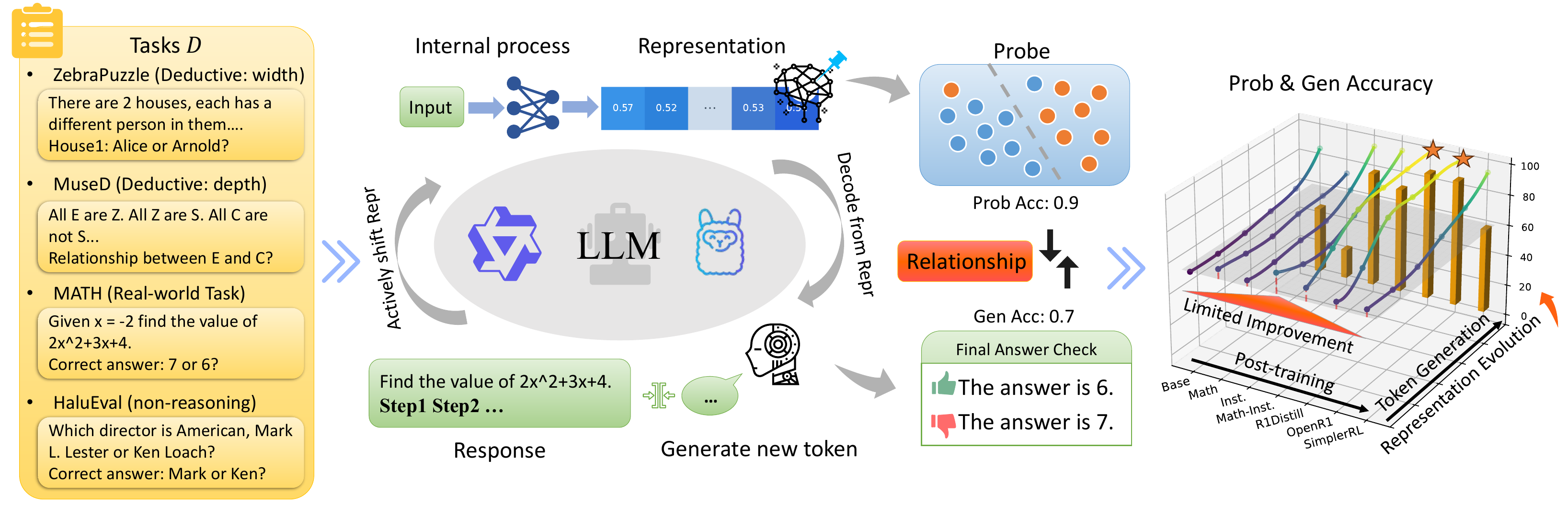}}
    \caption{\textbf{Overview of the dual-perspective analysis framework.} We conduct a comprehensive study across four representative tasks, measuring representation quality via probing alongside explicit generation accuracy. Our results indicate that post-training significantly improves generation accuracy but yields limited enhancement to static initial representation quality. Furthermore, tracking representation dynamics reveals that the reasoning process involves a significant distributional shift, where post-training empowers models to drive representations toward a higher-quality state. Finally, we investigate the relationship between explicit generation and implicit representation, quantifying their alignment and identifying the dominant factor driving this state transition.} 
    \label{fig:intro}
    \vspace{-4ex}
  \end{center}
\end{figure*}

To address this limitation and enhance interpretability, we extend our investigation beyond explicit generation to the model's latent \textit{representations}, which serve as a proxy for its internal states. We quantify representation quality using the probing technique \cite{ye2024physics}, which measures how accurately the correct answer can be predicted directly from the high-dimensional representation vector, thereby reflecting whether the model internally encodes the solution to a task. As illustrated in \cref{fig:intro}, our analysis framework examines both the static \textit{initial} quality prior to generation and the dynamic quality variation throughout the generation process. Comprehensive experiments on models at different training stages, spanning diverse representative tasks, LLM families, and model sizes, reveal several surprising yet universal findings, as detailed below.

As a foundational starting point for the reasoning process, we investigate the evolution of initial representation quality during post-training. We find that reasoning ability is already acquired during the pre-training phase. LLMs exhibit significant representation quality on reasoning tasks, with probing accuracy far exceeding random guessing. Notably, even without token generation, this initial representation quality surpasses explicit generation accuracy in over $60\%$ of tasks, indicating a strong latent reasoning ability. However, differing from the substantial gains in generation accuracy, post-training improves the initial representation quality to only a limited extent, with probing accuracy increasing by less than $5\%$ across nearly all training stages. This demonstrates that the improvement in generation accuracy does not simply stem from a better reasoning ability available immediately upon encountering a question.

Another observation is that strong reasoning models can achieve higher generation accuracy that exceeds their initial representation quality on difficult tasks. Since this cannot be explained by the negligible improvement in static initial quality, we further investigate the dynamics of the representation during generation. Our experiments reveal that, while representation quality remains relatively stable on non-reasoning tasks, reasoning tasks involve a continuous and significant representation change throughout the generation process. Post-training serves to enhance the model's capacity to drive this transition, enabling it to attain a higher \textit{final} quality after extensive CoT. Notably, although explicit generation accuracy can surpass the initial representation quality, it generally lags behind the final quality, indicating untapped potential within the model's internal states.

While we have observed internal changes during reasoning, the precise relationship between explicit generation and implicit representations remains underexplored. First, we statistically investigate their alignment, finding a high correlation between generation correctness and the final representations. Conversely, the low correlation with initial representations suggests that the model does not merely verbalize its initial latent thoughts; rather, the generation process actively constructs the solution independent of the initial state. Furthermore, the sharp divergence between initial and final representations confirms a fundamental distributional shift, and the extensive reasoning process facilitates this state transition. Second, we identify the primary driver of this transition. Counterfactual analysis demonstrates that the semantic content of the CoT is the dominant factor, whereas solely increasing computation during inference is insufficient to improve representation quality, and intrinsic parameter differences affect the shift results minimally.

Through comprehensive and in-depth analysis, our work offers a novel perspective to understand the reasoning process and the evolution of reasoning during training, elucidating the relationship between the model's internal representations and external generation. These findings provide valuable insights for further reasoning interpretability and highlight promising directions for model optimization from the viewpoint of internal states and signals.

\section{Related Work}
\textbf{Analysis of LLM reasoning evolution.} Following the success of reasoning models \cite{jaech2024openai, guo2025deepseek, team2025kimi}, extensive research has investigated how reasoning ability evolves during post-training. One line of work \cite{li2025llms, wang2025beyond, gandhi2025cognitive, vassoyan2025ignore} analyzes training data, identifying that specific patterns, such as reasoning structures and high-entropy tokens, are crucial for optimization results. Another stream of research \cite{havrilla2024teaching, yue2025does, liu2025understanding, wu2025invisible, zhang2025interplay, karan2025reasoning, matsutani2025rl} compares reasoning boundaries before and after training, revealing that while RL may not extend reasoning capability beyond the base model, distillation effectively transfers new abilities. However, these studies mainly analyze reasoning through explicit generation outcomes, treating the model as a black box and overlooking changes in the model's internal states. To address this limitation, we introduce another perspective from implicit representations to directly observe the model's latent judgments and reasoning processes. This approach uncovers meaningful findings through a distinct viewpoint, providing novel insights into both reasoning interpretability and the underlying mechanisms of reasoning evolution.

\textbf{Analysis of LLM representation.} Interpretability research suggests that internal representations encode meaningful knowledge \cite{hewitt2019structural, mohebbi2021exploring, ghandeharioun2024patchscopes, Gurnee2024language, atakishiyev2025explainability, yusupov2025internal}, enabling the prediction of specific properties via probing techniques \cite{belinkov2022probing, zhao2024explainability}. Prior studies have leveraged these representations to detect output correctness \cite{orgad2025llms, zhang2025icr, cencerrado2025no, zhang2025reasoning, liang2025clue}, uncertainty \cite{wang2025response}, and quality \cite{yusupov2025internal}. Distinct from predicting external generation performance, our analysis focuses on determining whether the model has internally discerned the correct solution for a task. While recent studies \cite{yan2024exploring, ye2024physics} have revealed that models possess an awareness of correct answers via latent reasoning \cite{yang2024large, biran2024hopping} prior to generation, we extend this investigation to complex reasoning tasks and analyze the dynamic evolution of this awareness throughout the CoT process. Most closely related to our work, recent studies \cite{kudo2024think, afzal2025knowing, wang2025chain} have also examined representation shifts during generation. However, we distinguish our contribution by analyzing how these shifts evolve across different training stages and by offering a deeper and novel analysis of the relationship between the internal representations and the external generation.

\section{Preliminaries}
In this section, we introduce the preliminaries for our experiments and analysis, detailing the formulation of core metrics and the experimental setup.

\subsection{Formulation}
We define two core metrics in our analysis: representation quality and generation accuracy. For clarity, we first formalize the LLM working process. We consider a Transformer-based model \cite{vaswani2017attention} $M_\theta$, which can be functionally decomposed into two components: a backbone function $f_\theta$ that maps the input to a latent representation and a decoding function $g_\theta$ that computes the next-token distribution from this representation. Following previous work \cite{huh2024platonic, yan2024exploring, zhang-etal-2025-exploring-generalizability}, we define the representation $c$ as the hidden state of the last token at the final layer. Given a prompt $x$, the auto-regressive generation proceeds through discrete time steps $t=1,\dots,T$. At each step $t$, the input $(x, y_{1:t})$ is first transformed into the representation $c_t:=f_\theta(x, y_{1:t}) \in \mathbb{R}^m$, where $m$ denotes the hidden dimension. We adopt the representation $c_t$ as a proxy for the model's full internal state $S_t$ at step $t$. Subsequently, the next token is sampled according to $y_{t+1} \sim g_\theta(\cdot|c_t)$. We denote the complete generation process of a sequence $y_{1:T}$ given a prompt $x$ as $y_{1:T} \sim M_\theta(\cdot|x)$.

\paragraph{Representation Quality.}
We quantify representation quality using the widely-adopted probing technique \cite{belinkov2017evaluating, belinkov2022probing, azaria2023internal, sky2024androids}. Given a dataset $\mathcal{D}^\text{repr} = \{(x^{(i)}, z^{(i)}\}$ for a $s$-class task, where the label $z \in \{0,\dots,s-1\}$, standard linear-probing \cite{hewitt2019structural, li2023inference, orgad2025llms} involves freezing the LLM backbone and training a linear classifier $h \in R^{m \times s}$ on the training split. This classifier maps the representation $c$ to a predicted probability distribution $p := \text{softmax}(h(c))$. And the classifier is optimized via cross-entropy loss:
\begin{equation}
\mathcal{L_\text{CE}}(\mathcal{D}^\text{repr}_\text{train}) = \mathbb{E}_{\{x^{(i)}, z^{(i)}\}\in \mathcal{D}^\text{repr}_\text{train}}[-\log(p(z^{(i)}|x^{(i)}))].
\end{equation}
The \textbf{representation quality} $Acc_\text{repr}(\mathcal{D}^\text{repr}_\text{test})$ is then defined as the classification accuracy on the test split:
\begin{equation}
     \resizebox{0.9\hsize}{!}{
    $\mathbb{E}_{\{x^{(i)},z^{(i)}\}\in\mathcal{D}^\text{repr}_\text{test}}[\textbf{1}(\arg\max_{z}(p(z|x^{(i)})) = z^{(i)})].$}
\end{equation}

However, standard probing is insufficient for tasks where the target label is not fixed but is conditioned on multiple variables, so we adopt the V-probing technique \cite{ye2024physics}, which encapsulates relevant variables within special tokens and trains their embeddings via a low-rank update to the embedding layer. Formally, the detailed training and calculation procedure is presented in \cref{alg:probing}. 

Under this definition, representation quality functions as a static metric for a specific task and context. Specifically, when the input comprises only the problem description $x$ prior to any explicit token generation, the measured accuracy constitutes the \textbf{initial representation quality}, which characterizes the model's capacity for pure latent reasoning \cite{yang2024large} derived from a single forward pass.

Although V-probing is nearly-linear, we posit that it remains a faithful indicator of the model's intrinsic representation quality. In Appendix \ref{sec:probing_proof}, we theoretically analyzed that the limited budget of trainable parameters is insufficient to independently achieve high probing accuracy on tasks of sufficient complexity. Empirical results in \cref{tab:baseline} verify this conclusion, showing that V-probing on a randomly initialized model fails to outperform majority guessing. The substantial performance gap between the pre-trained and randomly initialized models confirms that the observed accuracy derives primarily from the knowledge encoded within the model's existing parameters rather than the probe's capacity to fully fit the data distribution.

\paragraph{Generation Accuracy.}
\textbf{Generation accuracy} quantifies the model's ability to explicitly generate correct solutions. We evaluate this metric $Acc_\text{gen}(\mathcal{D}^\text{gen}_\text{test})$ on a test set $\mathcal{D}^\text{gen}_\text{test}$ containing identical problem instances to those in the probing dataset $\mathcal{D}^\text{repr}_\text{test}$:
\begin{equation}
    \resizebox{0.9\hsize}{!}{
    $\mathbb{E}_{\{x^{(i)}, z^{(i)}\}\in \mathcal{D}^\text{gen}_\text{test}, y^{(i)}_{1:T} \sim M_\theta(·|x^{(i)})}[\textbf{1}(\text{Ext}(x^{(i)}+y_{1:T}^{(i)}) = z^{(i)})],$}
\end{equation}
where $\text{Ext}(\cdot)$ denotes a function that extracts the final answer from the model response and maps it to the target label. In contrast to the static nature of representation quality, generation accuracy reflects the dynamic outcome of the entire auto-regressive reasoning process for a given task.

\subsection{Experimental Setup}
We select four representative tasks to comprehensively analyze reasoning ability. We adopt the deductive tasks ZebraPuzzle \cite{stojanovski2025reasoninggymreasoningenvironments} and MuseD \cite{li2024boostingdeductivereasoningstep} to evaluate reasoning breadth and depth, respectively, following \citet{allen2025physics}. \textbf{ZebraPuzzle} presents constraint satisfaction problems requiring the simultaneous resolution of multiple relational dependencies, while \textbf{MuseD} demands long-range, multi-hop deductive reductions. Both synthetic tasks are stratified into three difficulty levels (Low, Med, High). In addition to the deductive tasks, we select \textbf{MATH} \cite{hendrycksmath2021} to evaluate unstructured real-world reasoning. Finally, \textbf{HaluEval} \cite{li2023halueval} serves as a control task for factuality, with minimal reasoning requirements. To satisfy the probing constraints, we construct True-False (TF) and Multiple-Choice (MC) variants for each task. Detailed specifications are provided in Appendix \ref{sec:app task}.

For V-probing, we employ LoRA \cite{lora} training with the rank $r=4$, the learning rate $\gamma=1e^{-4}$, and the batch size $b=32$. For generation, we adopt a context window of $32,768$ tokens to accommodate long CoT, consistent with recent evaluations \cite{guo2025deepseek, qwq32b}, and utilize a rule-based function for answer extraction $\text{Ext}(\cdot)$. More implementation details are provided in Appendix \ref{sec:app probing training settings} and Appendix \ref{sec:app gen}.

\begin{figure}[ht]
  \begin{center}
    \centerline{\includegraphics[width=\columnwidth]{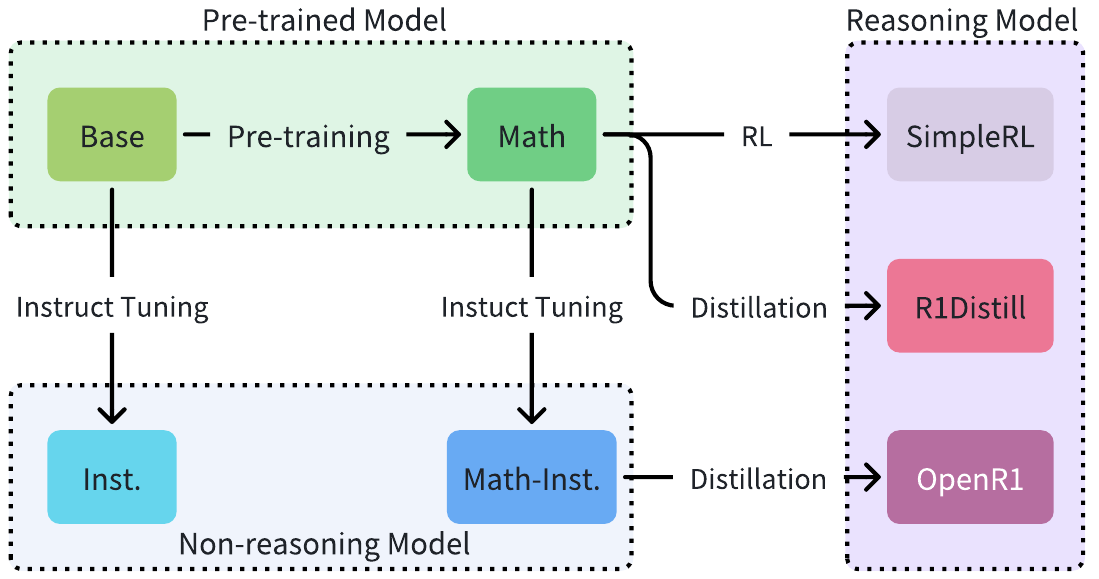}}
    \caption{
      \textbf{Training relationships among the Qwen2.5-7B series.}
    }
    \label{fig:relationship}
    \vspace{-4ex}
  \end{center}
\end{figure}

We conduct our primary analysis on the Qwen2.5-7B series \cite{qwen2.5, yang2024qwen25mathtechnicalreportmathematical}, evaluating seven models across distinct training stages. As illustrated in \cref{fig:relationship}, we categorize these models into three classes: Pre-trained models (Base, Math); Non-reasoning models with standard instruct-tuning (Instruct, Math-Instruct); and Reasoning models developed via distillation or RL (R1Distill \cite{guo2025deepseek}, OpenR1 \cite{openr1}, and SimpleRL \cite{zeng2025simplerl}). Detailed model specifications are provided in Appendix \ref{sec:app model}. To verify the universality of our findings across different model families and parameter scales, we replicate representative experiments on the Llama3.1-8B series \cite{dubey2024llama} and the smaller Qwen2.5-1.5B series, with these results detailed in Appendix \ref{sec:app result}.

\section{Representational Analysis of Reasoning Evolution}
Having established the metrics for representation quality and generation accuracy, we now present our primary experimental findings in this section.

\subsection{Development of Initial Representation Quality}
\label{sec:main exp1}
To analyze the evolution of reasoning ability in LLMs, we first examine the development of initial representation quality across various training stages.

\begin{table}[ht]
  \caption{\textbf{Probing accuracy} (\%) of the Pre-trained (PT) model across all tasks, compared against majority guessing and randomly initialized model probing baselines.}
  \label{tab:baseline}
  \begin{center}
    \begin{small}
      \begin{sc}
      \resizebox{\columnwidth}{!}{
            \begin{tabular}{l|ccc|ccc}
            \toprule[1.5px]
             & \multicolumn{3}{c|}{Zebra-TF} & \multicolumn{3}{c}{Zebra-MC} \\
            & High & Med & Low & High & Med & Low \\ \midrule
            Majority guessing & 50.00 & 50.00 & 50.00 & 22.22 & 28.57 & 40.00 \\
            \rowcolor{blue!10} \textbf{PT model probing} & \textbf{60.25} & \textbf{63.65} & \textbf{80.65} & \textbf{37.00} & \textbf{49.55} & \textbf{78.90} \\ 
            Random model probing & 52.20 & 53.80 & 51.20 & 25.00 & 30.95 & 43.25 \\ \midrule
            & \multicolumn{3}{c|}{MuseD-TF} & \multicolumn{3}{c}{MuseD-MC} \\
            & High & Med & Low & High & Med & Low \\ \midrule
            Majority guessing & 50.00 & 50.00 & 50.00 & 25.00 & 25.00 & 25.00 \\
            \rowcolor{blue!10} \textbf{PT model probing} & \textbf{92.44} & \textbf{100.00} & \textbf{100.00} & \textbf{89.08} & \textbf{100.00} & \textbf{100.00} \\ 
            Random model probing & 51.72 & 51.36 & 52.68 & 35.28 & 55.00 & 57.24 \\
            \midrule
            & \multicolumn{3}{c|}{MATH} & \multicolumn{3}{c}{HaluEval} \\
            & TF & MC & - & TF & MC & - \\ \midrule
            Majority guessing & 50.00 & 50.00 & - & 50.00 & 50.00 & - \\
            \rowcolor{blue!10} \textbf{PT model probing} & \textbf{64.90} & \textbf{60.95} & - & \textbf{60.02} & \textbf{60.81} & - \\ 
            Random model probing & 53.70 & 51.45 & - & 52.31 & 51.13 & - \\
            \bottomrule[1.5px]
            \end{tabular}
    }
    \end{sc}
    \end{small}
  \end{center}
\end{table}

\begin{figure*}[ht]
  \begin{center}
    \centerline{\includegraphics[width=\textwidth]{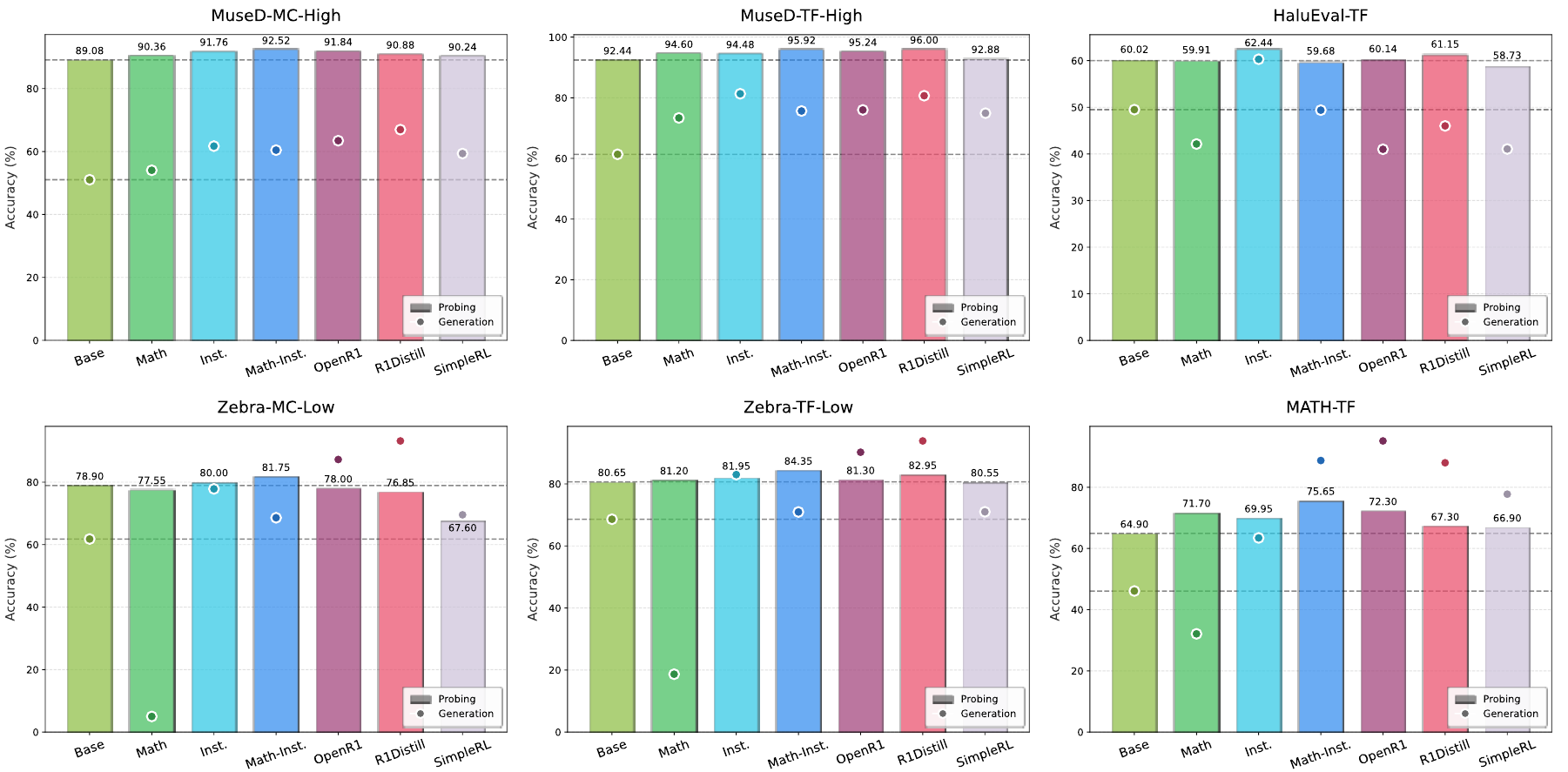}}
    \caption{
    \textbf{Development of initial representation quality and generation accuracy.} The gray dashed lines indicate the baseline performance of the Base model for probing and generation. See Appendix \ref{sec:exp1 more} for full results across all tasks.}
    \label{fig:exp1}
    \vspace{-4ex}
  \end{center}
\end{figure*}

As a preliminary validation, we assess the reliability of the probing methodology by comparing the accuracy of the pre-trained model against a randomly initialized baseline. The results in \cref{tab:baseline} reveal that the pre-trained model significantly outperforms the random model. Crucially, the random baseline's accuracy aligns closely with majority guessing, which always predicts the most frequent label. This confirms that the tasks are sufficiently complex, and the observed high accuracy derives from the model's internal parameters rather than the probe's trainable components. Furthermore, these results yield a notable finding that even in the pre-training phase, prior to any alignment, the model already exhibits significant latent reasoning capacity, enabling it to discern correct solutions before generation.

\begin{table}[ht]
  \caption{\textbf{The maximum performance gain} (Max $\Delta$) represents the highest accuracy improvement (\%) achieved by any other model over the Base model. The ratio indicates the relative scale of probing gains compared to generation gains.}
  \label{tab:delta comparison}
  \begin{center}
    \begin{small}
      \begin{sc}
      \resizebox{\columnwidth}{!}{
            \begin{tabular}{l|ccc|ccc}
            \toprule[1.5px]
             & \multicolumn{3}{c|}{Zebra-TF} & \multicolumn{3}{c}{Zebra-MC} \\
            Max $\Delta$ & High & Med & Low & High & Med & Low \\ \midrule
            Probing gain & 2.05 & 5.30 & 3.70 & 2.95 & 1.60 & 2.85 \\
            Generation gain & 30.05 & 25.75 & 25.25 & 12.75 & 33.20 & 31.40 \\
            \rowcolor{blue!10} \textbf{Ratio (Prob / Gen)} & \textbf{0.07} & \textbf{0.21} & \textbf{0.15} & \textbf{0.23} & \textbf{0.05} & \textbf{0.09} \\ 
            \midrule
            & \multicolumn{3}{c|}{MuseD-TF} & \multicolumn{3}{c}{MuseD-MC} \\
            Max $\Delta$ & High & Med & Low & High & Med & Low \\ \midrule
            Probing gain & 3.56 & 0 & 0 & 3.44 & 0 & 0 \\
            Generation gain & 19.96 & 12.60 & 11.16 & 15.92 & 14.44 & 14.24 \\ 
            \rowcolor{blue!10} \textbf{Ratio (Prob / Gen)} & \textbf{0.18} & \textbf{0.00} & \textbf{0.00} & \textbf{0.22} & \textbf{0.00} & \textbf{0.00} \\ 
            \midrule
            & \multicolumn{3}{c|}{MATH} & \multicolumn{3}{c}{HaluEval} \\
            Max $\Delta$ & TF & MC & - & TF & MC & - \\ \midrule
            Probing gain & 10.75 & 12.05 & - & 2.42 & 4.22 & - \\
            Generation gain & 49.05 & 54.25 & - & 10.81 & 32.15 & - \\
            \rowcolor{blue!10} \textbf{Ratio (Prob / Gen)} & \textbf{0.22} & \textbf{0.22} & - & \textbf{0.22} & \textbf{0.13} & - \\ 
            \bottomrule[1.5px]
            \end{tabular}
    }
    \end{sc}
    \end{small}
  \end{center}
  \vspace{-0ex}
\end{table}

Then we analyze the development of initial representation quality and generation accuracy across tasks, as shown in \cref{fig:exp1}. We observe that post-training yields only marginal gains in representation quality, with improvement typically remaining under $5\%$ across training stages. This contrasts sharply with the substantial gains in generation accuracy. Specifically, as shown in \cref{tab:delta comparison}, the maximum performance gain in probing is less than $25\%$ of the corresponding generation gain. These findings demonstrate that the development of reasoning does not stem primarily from the improvement of the initial internal states formed upon processing a question. Moreover, among reasoning models, those trained via distillation consistently outperform the pure RL variant in both representation quality and generation accuracy, suggesting that distillation may offer a more effective strategy for reasoning enhancement. Notably, initial representation quality frequently surpasses generation accuracy (observed in $10$ out of $16$ tasks, $\approx 60\%$). This indicates that models possess a strong latent reasoning capacity that is often not fully realized in their generation outcomes.

\subsection{Representation Quality Dynamics during Generation}
\label{sec:main exp2}
\begin{figure*}[ht]
  \begin{center}
    \centerline{\includegraphics[width=\textwidth]{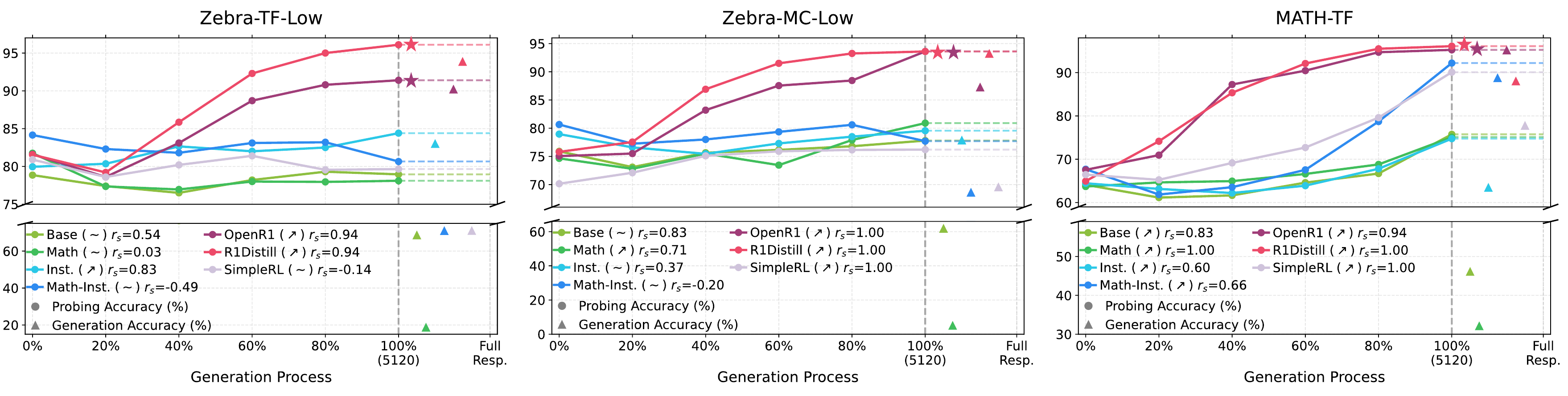}}
    \caption{
      \textbf{Representation quality dynamics during generation.} Trends are analyzed using linear regression and Spearman rank correlation. We mark the final probing accuracy of the strong reasoning models with \ding{72}. See Appendix \ref{sec:exp2 more} for full results.
    }
    \label{fig:exp2}
    \vspace{-5ex}
  \end{center}
\end{figure*}

\begin{figure}[ht]
  \begin{center}
    \centerline{\includegraphics[width=0.67\columnwidth]{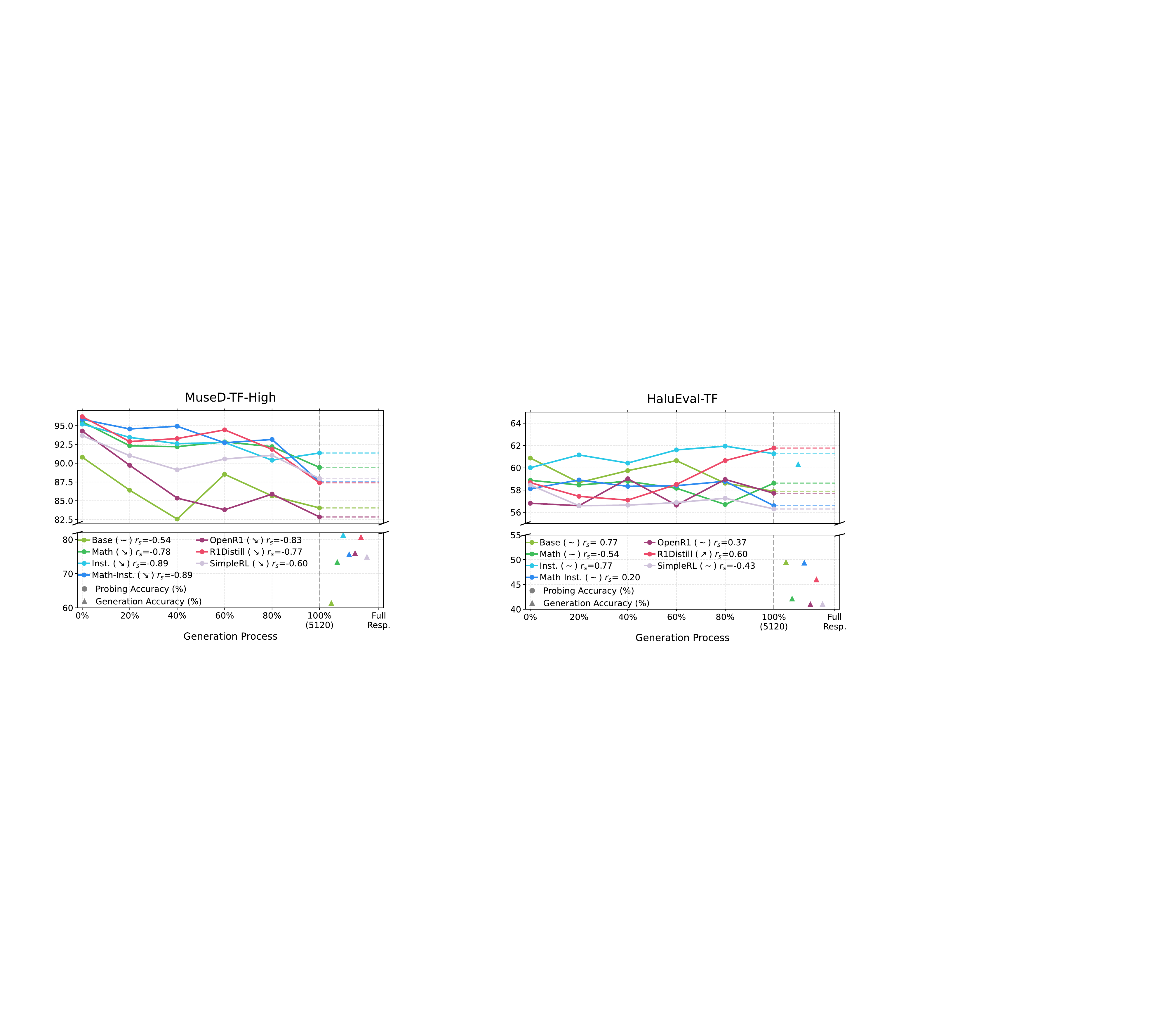}}
    \caption{
      \textbf{Representation quality remains relatively stable} during generation on the factuality task with minimal reasoning.
    }
    \label{fig:exp2halu}
    \vspace{-5ex}
  \end{center}
\end{figure}

Despite the general trend of strong initial representations, we observe that strong reasoning models can achieve generation accuracy that surpasses their initial representation quality, particularly on complex tasks such as Zebra and MATH. This gap motivates us to investigate whether CoT enables the model to progressively improve its internal judgments. To this end, we track probing accuracy across different stages of generation on representative tasks where generation accuracy exceeds the initial representation quality. For comparison, we conduct identical experiments on HaluEval. We split the model's response by ``\textbackslash n'' and construct progressive probing datasets by appending different portions of the CoT. Due to GPU memory constraints, we truncate contexts to the first $5,120$ tokens.

The main results are presented in \cref{fig:exp2} and \cref{fig:exp2halu}, where we quantify probing accuracy trends using linear regression and Spearman rank correlation \cite{spearman1904proof}. For linear regression, we determine the trend ($\nearrow$, $\searrow$) based on the slope if the $p$-value $< 0.1$; otherwise, the trend is classified as fluctuating ($\sim$). Additionally, we employ the Spearman correlation coefficient $r_s$ to assess monotonic relationships without relying on linear assumptions.

\begin{figure}[ht]
  \begin{center}
    \centerline{\includegraphics[width=0.67\columnwidth]{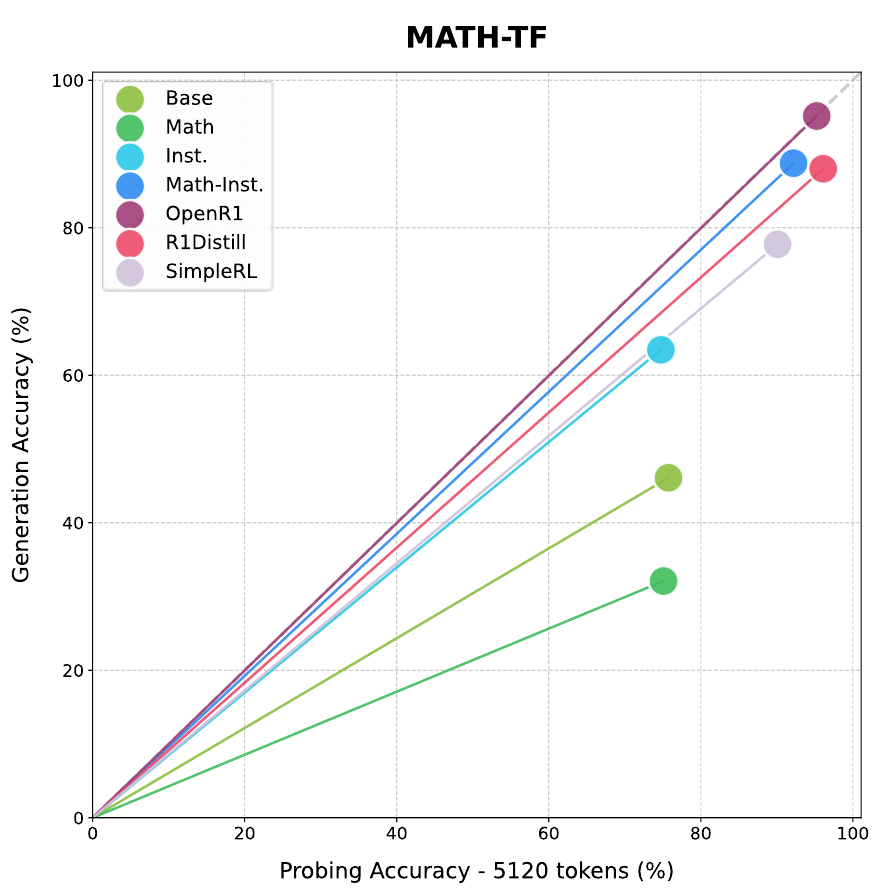}}
    \caption{
      \textbf{Generation accuracy versus probing accuracy.}
    }
    \label{fig:realization}
    \vspace{-6ex}
  \end{center}
\end{figure}

\begin{figure}[ht]
  \begin{center}
    \centerline{\includegraphics[width=0.67\columnwidth]{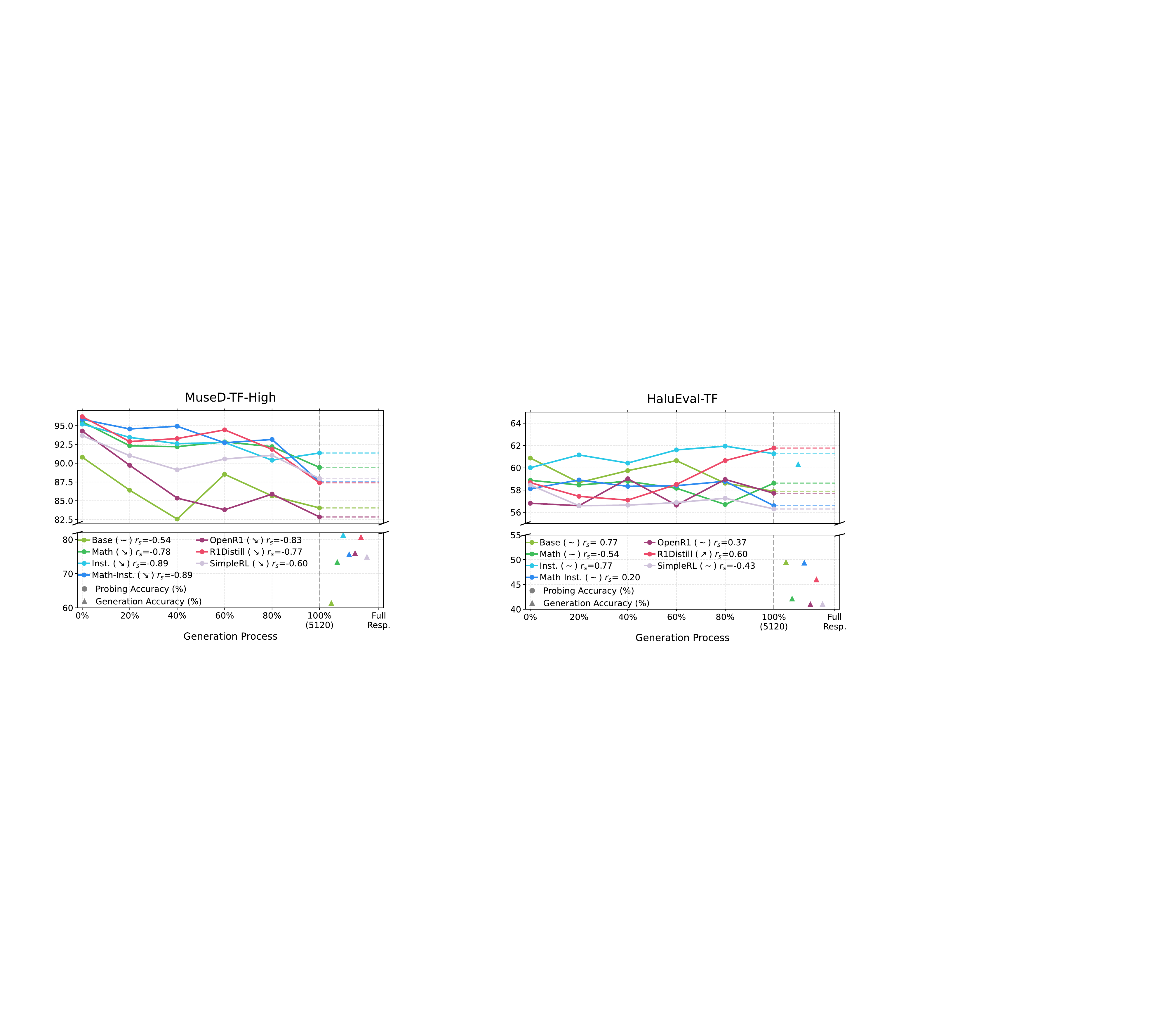}}
    \caption{
      \textbf{Representation quality exhibits a decreasing trend} during generation on the simple reasoning task.
    }
    \label{fig:exp2mused}
    \vspace{-5ex}
  \end{center}
\end{figure}

On the factuality task, which demands minimal reasoning, representation quality remains relatively stable, exhibiting only minor fluctuations throughout the CoT. In contrast, on reasoning tasks, representation quality evolves continuously during generation. We observe that models achieve improved representations following CoT, indicating that the latent reasoning capacity of a single forward pass is often insufficient for complex problems; instead, CoT enables models to iteratively refine internal states. Crucially, advanced reasoning models distinguish themselves by leveraging more effective CoT to attain significantly higher final representation quality, a capability acquired during post-training. Furthermore, as illustrated in \cref{fig:realization}, post-training can also enhance the model's ability to convert these internal representations into generation accuracy. However, a gap persists: while explicit generation accuracy can surpass initial quality, it frequently falls short of the representation quality of the better final state.

We also conduct the experiment on a simple task (MuseD) where the initial representation is already nearly-perfect. As shown in Figure \ref{fig:exp2mused}, we find that the reasoning process can be a double-edged sword, where the representation quality generally exhibits a downward trend, with degradation being particularly significant in strong reasoning models.

\begin{table*}[ht]
  \caption{\textbf{Generation-representation alignment.} Significance levels $P_{MWU}$ of Mann-Whitney U test: NS denotes non-significant ($>5e^{-2}$), * ($<5e^{-2}$), and *** ($<1e^{-10}$). \textbf{Bold}: higher alignment with CoT; \textcolor{blue}{Blue}: R1Distill alignment is lower than Inst. Full results are provided in Appendix \ref{sec:exp3 more}.}
  \label{tab:repr-gen alignment}
  \begin{center}
    \begin{small}
      \begin{sc}
      \resizebox{0.7\textwidth}{!}{
            \begin{tabular}{l|cccc|cccc}
            \toprule[1.5px]
            & \multicolumn{4}{c|}{Inst.} & \multicolumn{4}{c}{R1Distill} \\
            High-Difficulty Task &   Trend    &   $r_s$$\uparrow$   &   ROC-AUC$\uparrow$   &   $P_{MWU}$   &   Trend    &   $r_s$$\uparrow$   &   ROC-AUC$\uparrow$   &   $P_{MWU}$ \\ \midrule
             MuseD-TF        &   $\nearrow$  &   0.59  &  	0.57    &   *    &   $\sim$    &   \textcolor{blue}{0.07}    &  \textcolor{blue}{0.53}     &    ns   \\ \midrule
            \textbf{MuseD-TF+CoT}    &  $\nearrow$   &   \textbf{0.99}   &    \textbf{0.85}  &  ***    &   $\nearrow$    &   \textcolor{blue}{\textbf{0.95}}    &  \textbf{0.88}     &    ***   \\ \midrule
             Zebra-TF & $\nearrow$ & 0.88 & 0.55 & * & $\sim$ & \textcolor{blue}{0.07} & \textcolor{blue}{0.51} & ns \\ \midrule
            \textbf{Zebra-TF+CoT} & $\nearrow$ & \textbf{1.00} & \textbf{0.86} & *** & $\nearrow$ & \textcolor{blue}{\textbf{0.76}} & \textcolor{blue}{\textbf{0.62}} & *** \\ \midrule
            MATH-TF         &   $\nearrow$   &   0.88   &   0.59   &   *   &  $\sim$    &   \textcolor{blue}{0.45}    &    \textcolor{blue}{0.57   }&      * \\ \midrule
            \textbf{MATH-TF+CoT}     &   $\nearrow$   &   \textbf{0.98}   &  \textbf{0.70}   &  ***    &   $\nearrow$    &   \textcolor{blue}{\textbf{0.52}}    &   \textbf{0.90}    &   ***   \\
            \bottomrule[1.5px]
            \end{tabular}
    }
      \end{sc}
    \end{small}
  \end{center}
\end{table*}

\section{Deep Analysis}
While previous experiments establish the dynamic evolution of representation quality during generation, the underlying relationship between explicit token generation and latent representation transitions remains unclear, which we specifically investigate in this section.

\subsection{Statistical Analysis of Alignment}
\label{sec:more exp1}
We first examine the alignment between internal representations and external generation outcomes. We select Inst. and R1Distill as representative non-reasoning and reasoning models, respectively, given their superior performance among post-trained models. To quantify the alignment, we define two variables: the probing probability of the correct label $p \in [0,1]$ and the generation correctness indicator $\delta \in \{0,1\}$. We partition samples into ten equal-width buckets based on $p$, calculate the average accuracy within each bucket, and derive the linear regression trend and Spearman rank correlation coefficient $r_s$. Additionally, we report the sample-level ROC-AUC score and the $p$-value of the Mann-Whitney U test \cite{wilcoxon1945individual, mann1947test} to assess the degree and significance of the divergence in $p$ distributions between correct and incorrect generations.

As presented in \cref{tab:repr-gen alignment}, generation correctness exhibits a weak correlation with the initial representations, as evidenced by low ROC-AUC scores consistently remaining below $0.6$. This demonstrates that the generation process does not merely articulate the model's initial thought, and that the final answer in reasoning scenarios is largely independent of the initial representations. Conversely, generation correlates significantly more strongly with the final representations following CoT. However, the degree of alignment for R1Distill is not higher than that of Inst., indicating that reasoning models do not possess superior efficiency in converting internal representation quality into generation accuracy compared to non-reasoning models.

\begin{table}[ht]
  \caption{\textbf{Alignment between representations before and after CoT}. $r_s$: Spearman rank correlation coefficient; $r_p$: Pearson correlation coefficient; $r^2$ is adopted as the primary metric of linear regression, as $p$-values tend to become uninformative in large datasets. \textcolor{blue}{Blue}: R1Distill alignment is lower than Inst.}
  \label{tab:repr-repr alignment}
  \begin{center}
    \begin{small}
      \begin{sc}
      \resizebox{\columnwidth}{!}{
            \begin{tabular}{l|ccc|ccc}
            \toprule[1.5px]
            & \multicolumn{3}{c|}{Inst.} & \multicolumn{3}{c}{R1Distill} \\
            &   $r_s$$\uparrow$    &   $r_p$$\uparrow$   &   $r^2$$\uparrow$   &   $r_s$$\uparrow$    &   $r_p$$\uparrow$   &   $r^2$$\uparrow$ \\ \midrule
            MuseD-TF-High & 0.16 & 0.13 & 0.02 & \textcolor{blue}{0.10} & \textcolor{blue}{0.05} & \textcolor{blue}{0.00}  \\ \midrule
            Zebra-TF-High & 0.30 & 0.30 & 0.09 & \textcolor{blue}{0.29} & \textcolor{blue}{0.29} & \textcolor{blue}{0.08} \\ \midrule
            MATH-TF & 0.28 & 0.29 & 0.08 & \textcolor{blue}{0.09} & \textcolor{blue}{0.05} & \textcolor{blue}{0.00} \\
            \bottomrule[1.5px]
            \end{tabular}
    }
    \end{sc}
    \end{small}
  \end{center}
  \vspace{-3ex}
\end{table}

Additionally, we investigate the relationship between initial and final representations to quantify the extent of the state transition. The results are shown in \cref{tab:repr-repr alignment}, reporting the Pearson correlation coefficient $r_p$ \cite{pearson1896vii} and linear regression $r^2$, alongside the Spearman rank correlation coefficient $r_s$. Across all metrics, the initial and final representations exhibit extremely low correlation, which confirms that the transition is not simply a minor refinement of the initial states, but rather a fundamental distributional reshaping. Notably, R1Distill consistently exhibits lower correlations than Inst. across all reasoning tasks, indicating that reasoning facilitates a more intense state transition.

\subsection{Counterfactual Analysis of Transition Causes}
\label{sec:more exp2}
\begin{figure*}[ht]
  \begin{center}
    \centerline{\includegraphics[width=\textwidth]{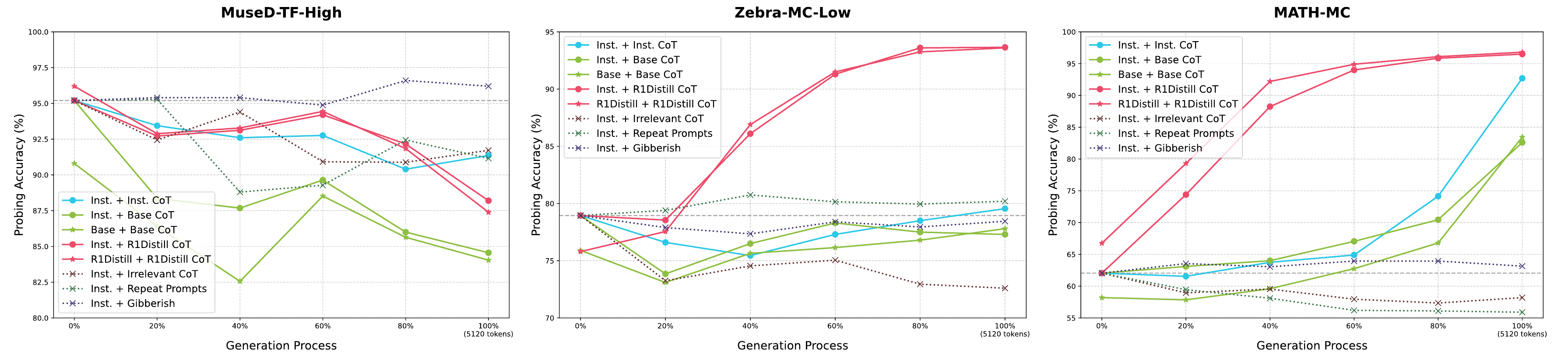}}
    \caption{
      \textbf{Inst. representation quality dynamics across different CoT sources.} The dashed lines indicate the initial probing accuracy.
    }
    \label{fig:comparison}
    \vspace{-4ex}
  \end{center}
\end{figure*}

\begin{figure}[ht]
  \begin{center}
    \centerline{\includegraphics[width=\columnwidth]{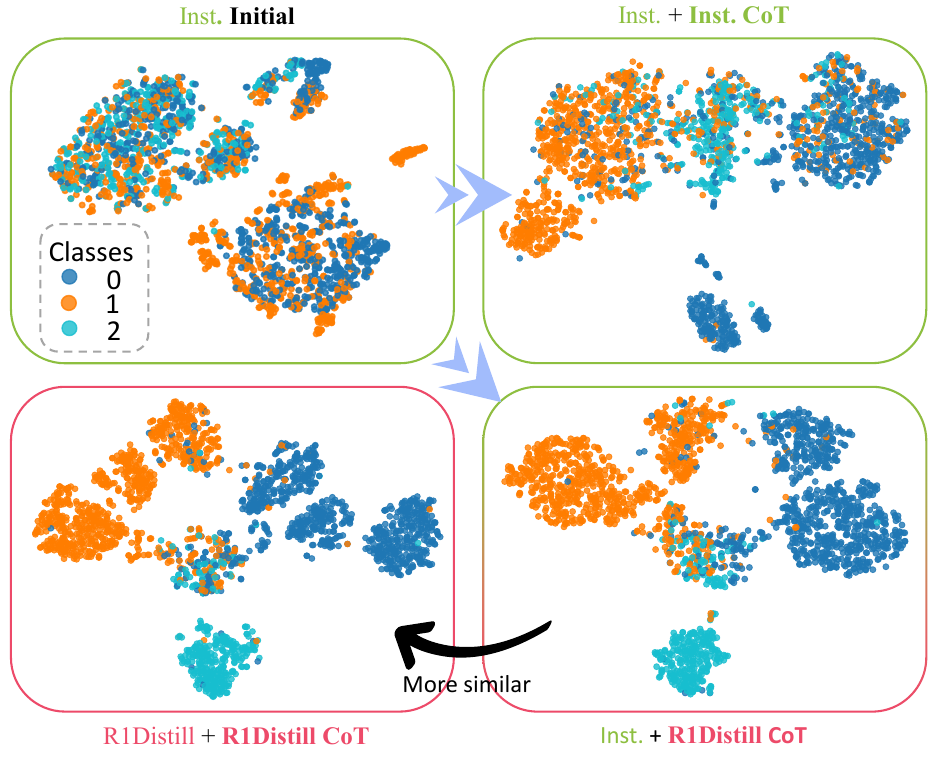}}
    \caption{
      \textbf{Visualization of the distribution shift in Inst. representations} induced by Inst. versus R1Distill CoT on Zebra-MC-Low.
    }
    \label{fig:tsne}
    \vspace{-4ex}
  \end{center}
\end{figure}

Previous sections show that the representations undergo a distributional transition during generation. However, the primary driver of this transition remains unclear. Therefore, we conduct counterfactual experiments to disentangle the effects of the generated content's semantics, the additional computation introduced by CoT inference, and the parameter differences among models at different stages.  

Specifically, we measure the representation quality of the Inst. when conditioned on CoT sequences from different sources: Base, Inst. itself, and R1Distill. To control for computation and context length, we set three baselines: gibberish dot sequences, repeated prompts (which prior work \cite{xu2024re, pfau2024let} suggests may aid reasoning), and CoT on irrelevant problems. For the gibberish and irrelevant baselines, we keep the token length the same as the original Inst. CoT. For the repeated prompt baseline, we repeat the problem statement up to five times. We conduct this analysis on three representative tasks where the Inst. exhibits downward, fluctuating, and rising trends.

\begin{figure}[ht]
  \begin{center}
    \centerline{\includegraphics[width=0.67\columnwidth]{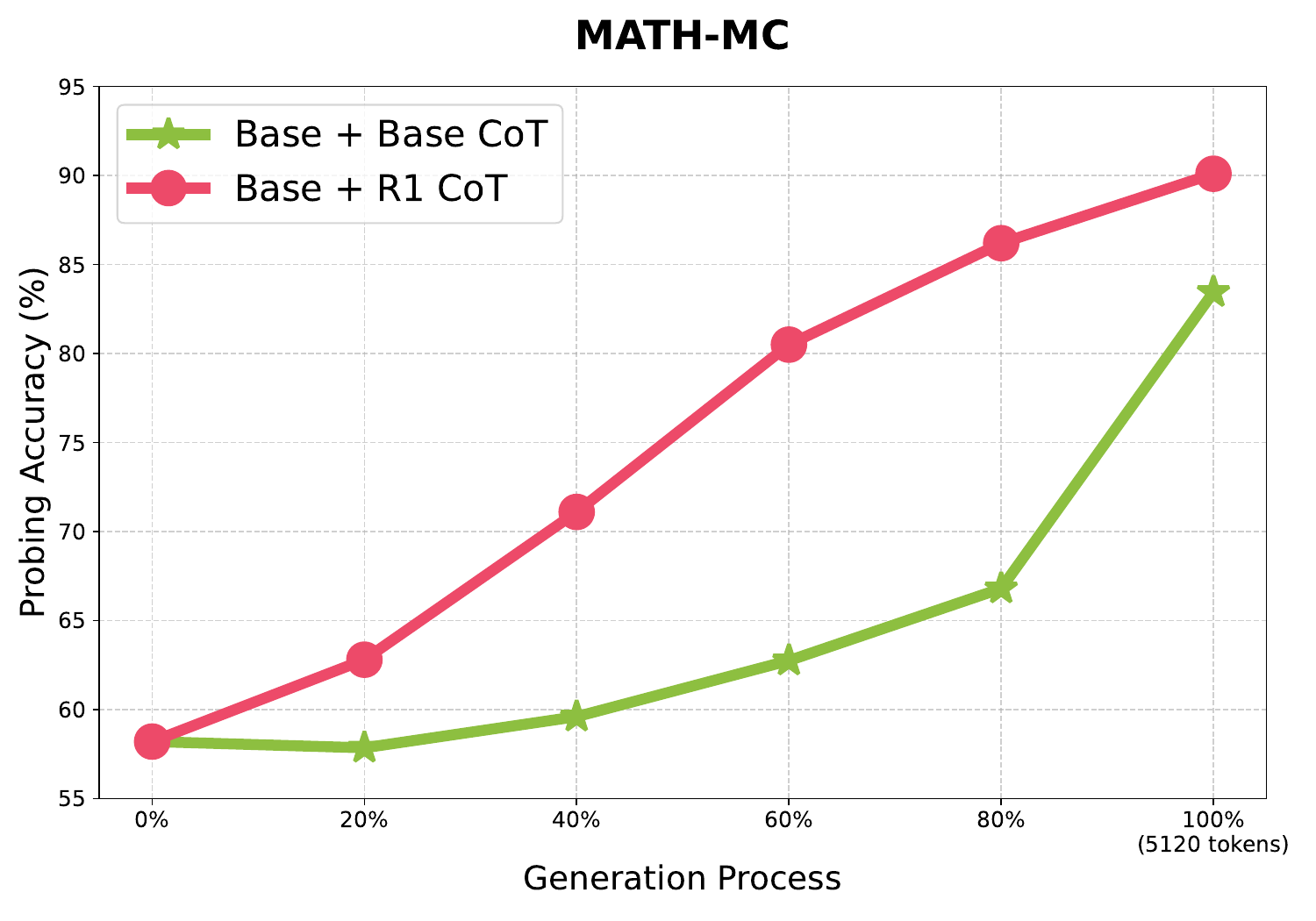}}
    \caption{
      \textbf{Dynamics of Base model representation quality} induced by Base versus R1 CoT.
    }
    \label{fig:comparison-r1}
    \vspace{-3ex}
  \end{center}
\end{figure}

The results in \cref{fig:comparison} demonstrate that the semantics of the generated content is the primary driver of the observed transitions. Merely increasing computation through meaningless and semantically irrelevant content fails to enhance representation quality, and repeated prompts also prove ineffective in the absence of generated responses. Moreover, when Inst. is conditioned on external CoT, its representation quality aligns more closely with that of the source model than with its self-generated CoT. This underscores the dominance of generation semantics over intrinsic parameter differences in shaping representations, suggesting that the model's fundamental representational capacity remains relatively stable throughout post-training. We provide further evidence by visualizing the representation distributions in \cref{fig:tsne} using t-SNE \cite{maaten2008visualizing}, where the distribution of Inst. conditioned on R1Distill CoT overlaps more significantly with that of R1Distill. Additionally, we apply DeepSeek-R1 \cite{guo2025deepseek} CoT to the Base model, and the results in \cref{fig:comparison-r1} reveal that high-quality reasoning paths can elicit superior representations even in pre-trained models. This highlights the primacy of semantics and explains why distillation can effectively empower weak models with strong reasoning capability.

\section{Conclusion and Discussion}
In this paper, we investigate the evolution of model reasoning ability from the perspectives of generation and representation. Through comprehensive experiments, we discover that post-training improves the static initial representation quality only to a limited extent. More importantly, the generation process does not simply verbalize the model's internal thoughts but continuously shifts representations into a new distribution; post-training enables the model to leverage this process more effectively to attain higher final representation quality. Furthermore, we explore the deep relationship between internal representations and external generation through statistical analysis and identify the main driver of the representation transition via counterfactual experiments.
These findings offer valuable insights into reasoning interpretability and highlight promising directions for model enhancement, including: utilizing internal signals for self-training to align generation with representation; designing loss functions at the representation level to enhance model representation quality; controlling model behaviors at test time to prevent representation degradation; and developing optimized generation strategies to attain high-quality representation efficiently.

\section*{Impact Statement}
This paper presents work whose goal is to advance the field of Machine Learning. There are many potential societal consequences of our work, none of which we feel must be specifically highlighted here.

\bibliography{example_paper}
\bibliographystyle{icml2026}

\newpage
\appendix
\onecolumn

\section{Experimental Task Introduction}
\label{sec:app task}
In this section, we detail the settings of the experimental tasks. We describe the construction of both the probing dataset $\mathcal{D}^\text{repr}$ and the generation dataset $\mathcal{D}^\text{gen}$, and provide illustrative examples for each task.

\textbf{ZebraPuzzle} \cite{stojanovski2025reasoninggymreasoningenvironments} is a classic deductive logic puzzle designed to evaluate a model's ability to solve constraint satisfaction problems. Concretely, the problem consists of $n$ houses occupied by $n$ different residents, where each resident is characterized by $m$ attributes. Given a set of constraints, the model must deduce the specific resident associated with each house. 

Since the solution requires the simultaneous resolution of all dependencies, this task effectively serves as a metric for reasoning \textbf{breadth}. Therefore, we stratify the dataset into three difficulty levels based on $n$ and $m$: Low for $n, m \in \{2,3\}$, Med (Medium) for $n, m \in \{3,4\}$, and High for $n, m \in \{4,5\}$. For the TF version, the model must verify whether a specific resident occupies a given house. The dataset is balanced with an equal distribution of correct matches and randomly sampled incorrect matches; For the MC version, the model is required to identify the correct resident of a given house from the list of all possible names. For each difficulty level, we generate a training set of $20,000$ samples and a test set of $2,000$ samples, ensuring a balanced distribution across $(n,m)$ combinations. Illustrative examples are shown in \cref{fig:zebra example}.

\begin{figure*}[htbp]
  \begin{center}
    \centering
    \begin{tcolorbox}[title=Illustrative examples of ZebraPuzzle, width=\textwidth]
        \textbf{Problem description:}\\
        This is a logic puzzle. There are 5 houses (numbered 1 on the left, 5 on the right), from the perspective of someone standing across the street from them. Each has a different person in them. They have different characteristics:\textbackslash n - Each person has a unique name: arnold, alice, david, carol, bob\textbackslash n - Everyone has a different favorite cigar: red eye, dunhill, blue master, pall mall, prince\textbackslash n - They all have a different favorite flower: lilies, daffodils, iris, carnations, orchid\textbackslash n - Everyone has something different for lunch: pizza, spaghetti, stir fry, grilled cheese, soup\textbackslash n - The people keep different animals: cat, dog, horse, fish, bird\textbackslash n\textbackslash n1. The cat lover is the person who loves a bouquet of daffodils.\textbackslash n2. The person who loves the soup is the person who smokes Blue Master.\textbackslash n3. Carol and the person who smokes Red Eye are next to each other.\textbackslash n4. The person who loves the boquet of orchid is directly left of the person who loves a bouquet of daffodils.\textbackslash n5. The person who loves stir fry is in the second house.\textbackslash n6. The person who loves the boquet of lilies is Alice.\textbackslash n7. The Dunhill smoker is the person who keeps horses.\textbackslash n8. Bob is directly left of Carol.\textbackslash n9. Alice is the Dunhill smoker.\textbackslash n10. The dog owner is the person who smokes Red Eye.\textbackslash n11. The bird keeper is in the fourth house.\textbackslash n12. The person who loves the boquet of iris is the person who loves the spaghetti eater.\textbackslash n13. Arnold is in the second house.\textbackslash n14. The person who loves eating grilled cheese is directly left of the person who smokes Blue Master.\textbackslash n15. The person who loves a bouquet of daffodils is the Prince smoker.\\ \\
        \textbf{TF version:}\\
        \textbf{Probe:} <|START|>House 1<|MID|>alice<|END|> \textit{(0 for False and 1 for True.)} \\
        \textbf{Question:} Decide whether alice is the name of the person who lives in House 1.\\
        \textbf{Correct answer:} 1 / True  \textit{(alice live in House 1)}\\\\
       \textbf{MC version:}\\
        \textbf{Probe:} <|START|>House 1<|END|> \textit{(0, 1, 2, 3, 4 for alice, arnold, bob, carol, david, respectively.)} \\
        \textbf{Question:} What is the name of the person who lives in House 1? \\
        \textbf{Correct answer:} 0 / alice  \textit{(alice live in House 1)}
    \end{tcolorbox}
    \caption{
      Illustrative examples of \textbf{ZebraPuzzle}.
    }
    \label{fig:zebra example}
  \end{center}
\end{figure*}

\textbf{MuseD} \cite{li2024boostingdeductivereasoningstep} is a deductive reasoning task designed to evaluate a model's ability to solve multi-hop syllogistic reasoning. Concretely, the problem presents a set of \textit{premises} defining relationships between various entities, utilizing the four classical Aristotelian categorical propositions, where $S$ and $P$ denote the \textit{Subject} and \textit{Predicate}:
\begin{itemize}
    \item Type 1 (\textit{A}): All S are P.
    \item Type 2 (\textit{E}): All S are not P.
    \item Type 3 (\textit{I}): There exists one S that is P.
    \item Type 4 (\textit{O}): There exists one S that is not P.
\end{itemize}
Solving the problem requires chaining valid syllogisms (e.g., All \textit{M} are \textit{P} $+$ All \textit{S} are \textit{M}  $\implies$ All $S$ are $P$) to derive the final relationship between a target subject and predicate.

Since the solution requires a sequence of $d$ deductive steps, this task effectively serves as a metric for reasoning \textbf{depth}. Therefore, we stratify the dataset into three difficulty levels based on the information structure: Low, where premises are all necessary and presented in the optimal deductive order; Med, where premises are all necessary but permuted randomly; High, where premises are permuted randomly and include 20\% distraction noise (irrelevant relationships). For the TF version, the model must verify the correctness of a specific relationship type between two entities. The dataset is balanced with an equal distribution of correct relationships and incorrect matches (Type A $\leftrightarrow$ Type O, Type E $\leftrightarrow$ Type I). For the MC version, the model is required to identify the best relationship type for a given pair of entities. For each difficulty level, we generate a training set of $20,000$ samples and a test set of $2,500$ samples, covering deduction depths $d \in [3,25]$ and ensuring a balanced distribution across four relationship types. Illustrative examples are shown in \cref{fig:mused example}.

\begin{figure*}[htbp]
  \begin{center}
    \centering
    \begin{tcolorbox}[title=Illustrative examples of MuseD, width=\textwidth]
        \textbf{Problem description:}\\
        Given:\textbackslash nAll Q are not N.\textbackslash nThere exists one Gamma that is W.\textbackslash nAll S are Q.\textbackslash nAll W are Rho.\textbackslash nAll Rho are S.\\ \\
        \textbf{TF version:}\\
        \textbf{Probe:} <|START|>W<|MID1|>N<|MID2|>2<|END|> \textit{(0 for False and 1 for True.)} \\
        \textbf{Question:} Decide whether the statement is true or false: All W are not N.\\
        \textbf{Correct answer:} 1 / True  \textit{(The relationship type is E)}\\\\
       \textbf{MC version:}\\
        \textbf{Probe:} <|START|>W<|MID|>N<|END|> \textit{(0, 1, 2, 3 for A, E, I, O, respectively.)} \\
        \textbf{Question:} Decide the relationship between W and N.\textbackslash n\textbackslash nType 1: All W are N.\textbackslash nType 2: All W are not N.\textbackslash nType 3: There exists one W that is N.\textbackslash nType 4: There exists one W that is not N.\textbackslash n\textbackslash nChoose the single best option that describes the relationship between W and N. \\
        \textbf{Correct answer:} 1 / Type 2  \textit{(The relationship type is E)}
    \end{tcolorbox}
    \caption{
      Illustrative examples of \textbf{MuseD}.
    }
    \label{fig:mused example}
  \end{center}
\end{figure*}

\textbf{MATH} \cite{hendrycksmath2021} evaluates a model's reasoning ability on \textbf{unstructured} and \textbf{real-world} mathematical problems. Unlike the limited output spaces of the previous two tasks, mathematical questions typically feature open-ended numerical or algebraic solutions, which are incompatible with fixed-class probing. To adapt the dataset for our framework, we implement a rule-based algorithm to generate plausible incorrect distractors, converting the task into answer verification. 

For the TF version, the model must verify whether a provided candidate answer is correct. The dataset is balanced with an equal distribution of ground truth answers and generated incorrect distractors. For the MC version, the model is required to identify the correct answer from a pair of candidates. We construct the training set using the $7,500$ samples from the original MATH training split and create a test set of $2,000$ samples randomly drawn from the original test set. Illustrative examples are shown in \cref{fig:math example}.

\begin{figure*}[htbp]
  \begin{center}
    \centering
    \begin{tcolorbox}[title=Illustrative examples of MATH, width=\textwidth]
        \textbf{Problem description:}\\
        Given $x = -2$ find the value of $2x^2+3x+4$.\\ \\
        \textbf{TF version:}\\
        \textbf{Probe:} <|ANS|>7<|END|> \textit{(0 for False and 1 for True.)} \\
        \textbf{Question:} I want you act as an answer judge. Given a math question and a candidate answer, your objective is to determine if the provided answer is correct or not.\textbackslash nQuestion: Given $x = -2$ find the value of $2x^2+3x+4$.\textbackslash nCandidate answer: 7\\
        \textbf{Correct answer:} 0 / False  \textit{(The correct answer is 6)}\\\\
       \textbf{MC version:}\\
        \textbf{Probe:} <|CA1|>7<|CA2|>6<|END|> \textit{(0, 1 for the first and the second candidate answers, respectively.)} \\
        \textbf{Question:} A. 7\textbackslash nB.6\textbackslash n\textbackslash nWhich answer is correct? \\
        \textbf{Correct answer:} 1 / B  \textit{(The correct answer is 6)}
    \end{tcolorbox}
    \caption{
      Illustrative examples of \textbf{MATH}.
    }
    \label{fig:math example}
  \end{center}
\end{figure*}

\textbf{HaluEval} \cite{li2023halueval} is a factuality benchmark to evaluate \textit{hallucination} detection in general knowledge queries. It serves as a control task that requires little reasoning ability, relying primarily on knowledge retrieval. We initially randomly split the dataset into a training set of $8,000$ samples and a test set of $2,000$ samples. However, we observe a significant format bias in the original dataset: ground truth answers are typically concise entities, whereas hallucinated answers are complete, lengthy sentences. To prevent models from exploiting answer length as a spurious shortcut, we employ GPT-4o \cite{hurst2024gpt} to rewrite the hallucinated answers into the same concise format as the ground truth answers. After filtering out instances where the rewritten hallucinated answer becomes semantically the same as the correct answer, $7,037$ training samples and $1,776$ test samples are retained. 

Similar to MATH, for the TF version, the model must verify whether a provided candidate answer is factually correct. The dataset is balanced with an equal distribution of correct answers and hallucinated answers. For the MC version, the model is required to identify the correct answer from a pair of candidates. Illustrative examples are shown in \cref{fig:halu example}.

\begin{figure*}[htbp]
  \begin{center}
    \centering
    \begin{tcolorbox}[title=Illustrative examples of HaluEval, width=\textwidth]
        \textbf{Problem description:}\\
        Dua Lipa, an English singer, songwriter and model, the album spawned the number-one single \"New Rules\" is a song by English singer Dua Lipa from her eponymous debut studio album, released in what year?\\ \\
        \textbf{TF version:}\\
        \textbf{Probe:} <|ANS|>2018<|END|> \textit{(0 for False and 1 for True.)} \\
        \textbf{Question:} \textit{HaluEval template} $+$ Question: Dua Lipa, an English singer, songwriter and model, the album spawned the number-one single \"New Rules\" is a song by English singer Dua Lipa from her eponymous debut studio album, released in what year?\textbackslash nAnswer: 2018\textbackslash nYour Judgement: \\
        \textbf{Correct answer:} 0 / False  \textit{(The correct answer is 2017)}\\\\
       \textbf{MC version:}\\
        \textbf{Probe:} <|CA1|>2017<|CA2|>2018<|END|> \textit{(0, 1 for the first and the second candidate answers, respectively.)} \\
        \textbf{Question:} A. 2017\textbackslash nB.2018\textbackslash n\textbackslash nWhich answer is correct? \\
        \textbf{Correct answer:} 0 / A  \textit{(The correct answer is 2017)}
    \end{tcolorbox}
    \caption{
      Illustrative examples of \textbf{HaluEval}.
    }
    \label{fig:halu example}
  \end{center}
\end{figure*}

\section{Model Introduction}
\label{sec:app model}
In this section, we detail the training relationships within our experimental models, including Qwen2.5-7B, -1.5B series \cite{qwen2.5, yang2024qwen25mathtechnicalreportmathematical} and Llama3.1-8B series \cite{dubey2024llama}. We primarily conduct experiments and analysis on the Qwen2.5-7B series, as its diverse variants spanning multiple training stages provide an ideal testbed for comprehensively analyzing the evolution of reasoning ability.

\textbf{Qwen2.5-7B.} In this work, we evaluate a suite of seven different models from the Qwen2.5-7B series, spanning three distinct developmental stages: pre-training, standard instruct tuning, and reasoning-focused training.
\begin{itemize}
    \item Pre-trained models: Qwen2.5-7B-Base, the foundational general purpose model. Building on this, Qwen2.5-7B-Math undergoes continuous pre-training on a specialized corpus of mathematical corpus and synthetic data from Qwen2-Math-Instruct \cite{yang2024qwen2}. Pre-trained models represent the base capability of the model series prior to any alignment.
    \item Instruction-tuned models: Derived from the Base and Math models, Qwen2.5-7B-Instruct and Qwen2.5-7B-Math-Instruct, respectively, undergo a standard post-training pipeline that includes Supervised Fine-Tuning (SFT) and RL to enhance instruction following ability. While we also refer to these as ``non-reasoning'' models in the main body to distinguish them from the reasoning models, they naturally retain reasoning ability to some extent.
    \item Reasoning-optimized models: We include three specialized models designed to maximize reasoning performance. Qwen2.5-7B-OpenR1 \cite{openr1} and Qwen2.5-7B-R1Distill \cite{guo2025deepseek} are distilled models based on Math-Instruct and Math models, respectively. They are fine-tuned (SFT) on reasoning responses generated by DeepSeek-R1 \cite{guo2025deepseek} without additional RL. Conversely, Qwen2.5-7B-SimpleRL-Zoo \cite{zeng2025simplerl} applies RL (specifically GRPO \cite{shao2024deepseekmath}) directly to the Math model. These models typically exhibit superior performance on tasks necessitating multi-hop reasoning and analysis compared to their predecessors.
\end{itemize}

\textbf{Qwen2.5-1.5B.} The Qwen2.5-1.5B series follows a training pipeline identical to that of the 7B series, with the exception of the OpenR1 variant, which does not exist at this scale. Therefore, we include the remaining six corresponding models in our experimental analysis. 

\textbf{Llama3.1-8B.} We evaluate four different variants of the Llama3.1-8B series. In addition to the Base model, we include the Instruct, R1Distill, and SimpleRL variants, which are derived from the Base model via instruction tuning, distillation, and RL, respectively.

\section{Probing Settings}
\label{sec:app probing}
In this section, we detail the implementation of the V-probing technique \cite{ye2024physics} and specify the experimental hyperparameters and settings to ensure reproducibility. Furthermore, we provide an information-theoretic discussion suggesting that, for tasks of sufficient complexity, the additional trainable parameters in V-probing alone lack the capacity to solve the task.

\subsection{Training Settings}
\label{sec:app probing training settings}
We implement V-probing using Low-Rank Adaption (LoRA) \cite{lora}. Specifically, we adopt the configuration with a small rank of $r=4$, a scaling factor of $\alpha=16$, a dropout ratio of $p=0.1$, and no bias terms. Optimization is performed using the standard Adam optimizer \cite{Adam} with a learning rate of $\gamma=1e^{-4}$ and a batch size of $b=32$ by default. All models are initialized in bfloat16 precision.

For the evaluation of initial representation quality in section \ref{sec:main exp1}, we train the model for a fixed budget of $N=10$ epochs to ensure sufficient training ($N=30$ epochs for MATH and HaluEval tasks that have less training data). The input $x$ is constructed by concatenating the probe trigger after the problem description, and the label is predicted using the hidden state of the last token at the final layer. When evaluating the development of representation quality during the CoT process in section \ref{sec:main exp2}, we dynamically adjust the training epochs based on the convergence speed observed in the initial experiments. In this experimental setting, we introduce another special token \texttt{<|Reasoning|>} to separate the problem statement and the CoT, where the input $x$ is formatted as: \texttt{problem description} $+$ \texttt{<|Reasoning|>} $+$ \texttt{CoT} $+$ \texttt{probe trigger}. 

For the comparative analysis in section \ref{sec:more exp2}, we establish three baseline conditions to evaluate the necessity of meaningful CoT content: gibberish sequences, repeated prompts, and irrelevant CoT. In each experimental setting, we replace the original \texttt{CoT} with a corresponding baseline sequence. For gibberish sequences, we substitute the \texttt{CoT} part with a sequence of repeated dots (`` .'') of the same length; For repeated prompts, we replace the \texttt{CoT} part with the probe trigger appended with the problem description, repeated $1$ to $5$ times; For irrelevant CoT, we substitute the \texttt{CoT} part with a CoT generated for an unrelated decryption task, truncated to match the original token length.

\subsection{Probing Algorithm}
\label{sec:app probing algorithm}
The pseudocode for the V-probing implementation is presented in \cref{alg:probing}.

\begin{algorithm}[htbp]
  \caption{Representation Quality Quantification via V-probing}
  \label{alg:probing}
  \begin{algorithmic}[1]
    \STATE {\bfseries Input:} Dataset $\mathcal{D}^\text{repr}$ split into $\mathcal{D}^\text{repr}_\text{train}$ and $\mathcal{D}^\text{repr}_\text{test}$, number of special tokens $k$, number of classes $s$; model component $f_\theta$ with hidden size $m$; learning rate $\gamma$, LoRA rank $r$, number of epochs $N$.
    \STATE
    \STATE {\bfseries Training:}
    \STATE Freeze backbone parameters $\theta$
    \STATE Initialize trainable parameters $\Phi = \{E_\text{sp}, \Delta \theta_{\text{emb}}, h\}$, where:
        \STATE \quad $E_\text{sp} \in \mathbb{R}^{k \times m}$ (Special token embeddings)
        \STATE \quad $\Delta \theta_{\text{emb}}$ (LoRA adapter for embeddings, rank $r$)
        \STATE \quad $h \in \mathbb{R}^{m \times s}$ (Linear probe head)
    \FOR{epoch $= 1$ {\bfseries to} $N$}
        \FOR{each batch $\mathcal{B} = \{(x^{(i)}, z^{(i)})\} \subset \mathcal{D}^\text{repr}_\text{train}$}
            \STATE Compute representations: $c^{(i)} = f_{\theta, E_\text{sp}, \Delta \theta_{\text{emb}}}(x^{(i)})$
            \STATE Estimate distributions: $p_\Phi(z | x^{(i)}) = \text{softmax}(h(c^{(i)}))$
            \STATE Compute Loss: $\mathcal{L}_{\text{CE}} = \frac{1}{|\mathcal{B}|} \sum_{(x^{(i)}, z^{(i)}) \in \mathcal{B}} -\log p_\Phi(z^{(i)} | x^{(i)})$
            \STATE Update $\Phi \leftarrow \Phi - \gamma \nabla_\Phi \mathcal{L}_{\text{CE}}$
        \ENDFOR
    \ENDFOR
    \STATE
    \STATE {\bfseries Evaluation:}
    \STATE Initialize correct count $C \leftarrow 0$
    \FOR{each batch $\mathcal{B} = \{(x^{(i)}, z^{(i)})\} \subset \mathcal{D}^\text{repr}_\text{test}$}
        \STATE Compute representations: $c^{(i)} = f_{\theta, E_\text{sp}, \Delta \theta_{\text{emb}}}(x^{(i)})$
        \STATE Estimate distributions: $p_\Phi(z | x^{(i)}) = \text{softmax}(h(c^{(i)}))$
        \STATE Predict label: $\hat{z}^{(i)} = \arg\max_z p_\Phi(z | x^{(i)})$
        \STATE $C \leftarrow C + \sum_{(x^{(i)}, z^{(i)}) \in \mathcal{B}} \mathbf{1}(\hat{z}^{(i)} = z^{(i)})$
    \ENDFOR
    \STATE \textbf{Return} $Acc_{\text{repr}} = C / |\mathcal{D}^\text{repr}_\text{test}|$
  \end{algorithmic}
\end{algorithm}

\subsection{Information-Theoretic Analysis of V-probing}
\label{sec:probing_proof}
In this section, we analyze the capacity of the V-probing parameters to fit the task data. We establish an upper bound on the achievable training accuracy based on information theory, under the assumption that the model's pre-existing parameters possess no knowledge of the target labels. We demonstrate that, for tasks of sufficient complexity, the probe's limited parameter budget lacks the capacity to reach high accuracy independently. The following derivation focuses on the simplest binary classification setting.

\textbf{Assumption 1 (Random Labels).} Let the target labels $Y^N = (Y_1, \dots, Y_N) \in \{0, 1\}^N$ be independent and identically distributed (\textit{i.i.d.}) random variables following a Bernoulli distribution with $p=0.5$, where $N$ denotes the dataset size. Consequently, the total label entropy is $H(Y^N) = N$ bits. This assumption represents the scenario where the task is sufficiently complex, containing $N$ bits of incompressible entropy, and the model's existing parameters provide no informative signal. 

\textbf{Assumption 2 (Limited Capacity).} Let $\Theta$ denote the trainable parameters of the probe. We bound the effective information capacity of these parameters by $P_{eff}$ bits, such that $H(\theta) \le P_{eff}$.

\begin{proposition}[Capacity-Constrained Accuracy Bound]
\label{prop:bound}
Let $Acc$ be the random variable representing the training accuracy of the probe with parameter capacity $P_{eff}$ bits on a balanced binary classification dataset of size $N$. Under the two assumptions, the expected accuracy is upper bounded by:
\begin{equation}
\mathbb{E}[Acc] \le \frac{1}{2} + \sqrt{\frac{\ln 2}{2} \cdot \frac{P_{eff}}{N}}
\end{equation}
\end{proposition}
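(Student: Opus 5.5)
The bound follows from a mutual-information argument: the labels can only be predicted through the trained parameters, so accuracy is limited by how much the parameters know about the labels. Write $\hat Y^N$ for the probe's predictions on the training set. They are a (possibly randomized) function of the learned parameters $\Theta$, and under Assumption 1 the frozen backbone carries no information about $Y^N$. This gives the Markov chain $Y^N \to \Theta \to \hat Y^N$, and by data processing
\[
I(Y^N;\hat Y^N)\le I(Y^N;\Theta)\le H(\Theta)\le P_{eff}
\]
bits, using Assumption 2. The task is then to turn a bound on $I(Y^N;\hat Y^N)$ into a bound on $\mathbb{E}[Acc]=\frac1N\sum_i \Pr(\hat Y_i=Y_i)$.

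Next I reduce to single coordinates. Because the $Y_i$ are independent, the standard chain-rule inequality gives $\sum_i I(Y_i;\hat Y_i)\le I(Y^N;\hat Y^N)$. To see this, note $I(Y^N;\hat Y^N)\ge \sum_i I(Y_i;\hat Y^N\mid Y^{i-1})$. Independence gives $I(Y_i;\hat Y^N\mid Y^{i-1}) = I(Y_i;\hat Y^N,Y^{i-1})\ge I(Y_i;\hat Y_i)$. Define $q_i=\Pr(\hat Y_i=Y_i)$ and $E_i=\mathbf 1(\hat Y_i=Y_i)$. Since $Y_i$ is uniform, and $E_i$ is a function of $(Y_i,\hat Y_i)$, I plan to show $I(Y_i;\hat Y_i)\ge I(Y_i;E_i)$-type quantities are at least $D(\mathrm{Bern}(q_i)\,\|\,\mathrm{Bern}(1/2))=1-h(q_i)$ bits. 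The cleanest route is Fano in binary form: $H(Y_i\mid\hat Y_i)\le h(1-q_i)$, so $I(Y_i;\hat Y_i)\ge 1-h(q_i)$.

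Then I apply Pinsker's inequality in nats: $D(\mathrm{Bern}(q)\|\mathrm{Bern}(1/2))\ge 2(q-\tfrac12)^2$. Converting to bits gives $1-h(q_i)\ge \frac{2}{\ln 2}(q_i-\frac12)^2$. Summing and using Jensen (convexity of the square),
\[
\frac{2N}{\ln2}\Big(\mathbb{E}[Acc]-\tfrac12\Big)^2\le \frac{2}{\ln 2}\sum_i (q_i-\tfrac12)^2 \le P_{eff}.
\]
Here I may assume $\mathbb{E}[Acc]\ge 1/2$, since otherwise the bound is trivial. Rearranging yields $\mathbb{E}[Acc]\le \frac12+\sqrt{\frac{\ln2}{2}\frac{P_{eff}}{N}}$, which is exactly the constant in the statement and confirms the Pinsker route is the intended one.

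The main obstacle is the modeling step, not the inequalities. I need to justify that $\hat Y^N$ depends on $Y^N$ only through $\Theta$. The training inputs $x^{(i)}$ are fixed, but they are not independent of the labels in a real task. So I will read Assumption 1 as saying that, conditioned on the inputs and frozen backbone, the labels are uniform and independent, and all statements are conditional on $X^N$. I also need $H(\Theta)\le P_{eff}$ to bound $I(Y^N;\Theta)$, which holds because mutual information is at most entropy. Fano for binary $Y_i$ with a binary estimate is immediate. If $\hat Y_i$ is randomized, the chain still holds by adding independent randomness to $\Theta$'s side.
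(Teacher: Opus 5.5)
Your proposal is correct and follows essentially the same route as the paper. The shared steps are data processing $I(Y^N;\hat Y^N)\le I(Y^N;\Theta)\le H(\Theta)\le P_{eff}$, a per-coordinate split, binary Fano, and Pinsker. The paper writes the split as $H(Y^N\mid\hat Y^N)\le\sum_i H(Y_i\mid\hat Y_i)$, which is equivalent to your $\sum_i I(Y_i;\hat Y_i)\le I(Y^N;\hat Y^N)$ because $H(Y^N)=N=\sum_i H(Y_i)$. The only cosmetic difference is the order of the last two steps: the paper applies Jensen to the concave $h_2$ and then a single Pinsker step, whereas you apply Pinsker coordinatewise and then Jensen to the convex square.

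Your explicit conditioning on the inputs to justify the Markov chain $Y^N\to\Theta\to\hat Y^N$ is a point the paper leaves implicit. Do drop the aside about $I(Y_i;E_i)$: it is not bounded by $I(Y_i;\hat Y_i)$ in general, since a constant predictor gives $E_i=1-Y_i$, so $I(Y_i;E_i)=1$ while $I(Y_i;\hat Y_i)=0$. It is also not used by the Fano step you actually carry out.
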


\textit{Proof}
\begin{enumerate}
    \item \textbf{Global Information Constraint:} 
    Let $\hat{Y}^N$ denote the sequence of predictions. The mutual information between the true labels and the predictions is bounded by the capacity of the probe parameters $\Theta$:
    \begin{equation}
    \label{eq:1}
    I(Y^N; \hat{Y}^N) \le I(Y^N; \Theta) \le H(\Theta) \le P_{eff} 
    \end{equation}
    
    \item \textbf{Conditional Entropy Lower Bound:} 
    Since $I(Y^N; \hat{Y}^N) = H(Y^N) - H(Y^N | \hat{Y}^N)$ and $H(Y^N) = N$, we have:
    \begin{equation}
    \label{eq:2}
    H(Y^N | \hat{Y}^N) = N - I(Y^N; \hat{Y}^N) \ge N - P_{eff}
    \end{equation}
    
    \item \textbf{Decomposition and Fano's Inequality:} 
    The joint conditional entropy is upper bounded by the sum of marginal conditional entropies:
    \begin{equation}
    \label{eq:3}
    H(Y^N | \hat{Y}^N) \le \sum_{i=1}^N H(Y_i | \hat{Y}_i)
    \end{equation}
    Let $P_{e,i} = \mathbb{P}(\hat{Y}_i \neq Y_i)$ be the marginal error probability for the $i$-th sample. By Fano's inequality for binary variables, $H(Y_i | \hat{Y}_i) \le h_2(P_{e,i})$, where $h_2(\cdot)$ is the binary entropy function. Thus:
    \begin{equation}
    \label{eq:4}
    H(Y^N | \hat{Y}^N) \le \sum_{i=1}^N h_2(P_{e,i})
    \end{equation}
    
    \item \textbf{Applying Jensen's Inequality:} 
    Let the expected error rate be $\bar{P}_e = \frac{1}{N} \sum_{i=1}^N P_{e,i} = 1 - \mathbb{E}[Acc]$.
    Since the binary entropy function $h_2(p)$ is concave, Jensen's Inequality implies that the average of the function values is less than or equal to the function of the average:
    \begin{equation}
    \label{eq:5}
    \frac{1}{N} \sum_{i=1}^N h_2(P_{e,i}) \le h_2\left(\frac{1}{N} \sum_{i=1}^N P_{e,i}\right) = h_2(\bar{P}_e)
    \end{equation}
    Combining \cref{eq:2}, \cref{eq:4}, and \cref{eq:5}:
    \begin{equation}
    \label{eq:6}
    N - P_{eff} \le N \cdot h_2(\bar{P}_e) \implies h_2(1 - \mathbb{E}[Acc]) \ge 1 - \frac{P_{eff}}{N}
    \end{equation}
    Note that $h_2(1 - x) = h_2(x)$, so we have $h_2(\mathbb{E}[Acc]) \ge 1 - \frac{P_{eff}}{N}$.
    
    \item \textbf{Pinsker's Inequality Bound:} 
    We relate the binary entropy to the classification accuracy deviation using Pinsker's inequality. The KL divergence between a Bernoulli variable with parameter $p$ and the uniform distribution is $1 - h_2(p)$. Pinsker's inequality states:
    \begin{equation}
    \label{eq:7}
    1 - h_2(p) \ge \frac{2}{\ln 2} (p - 0.5)^2
    \end{equation}
    Substituting $p = \mathbb{E}[Acc]$ and rearranging:
    \begin{equation}
    \label{eq:8}
    h_2(\mathbb{E}[Acc]) \le 1 - \frac{2}{\ln 2} (\mathbb{E}[Acc] - 0.5)^2
    \end{equation}
    
    \item \textbf{Final Derivation:} 
    Substituting \cref{eq:8} into \cref{eq:6}:
    \begin{equation}
    \label{eq:9}
    1 - \frac{P_{eff}}{N} \le 1 - \frac{2}{\ln 2} (\mathbb{E}[Acc] - 0.5)^2
    \end{equation}
    \begin{equation}
    \label{eq:10}
    \frac{2}{\ln 2} (\mathbb{E}[Acc] - 0.5)^2 \le \frac{P_{eff}}{N}
    \end{equation}
    Solving for $\mathbb{E}[Acc]$ gives the result:
    \begin{equation}
    \label{eq:11}
    \mathbb{E}[Acc] \le 0.5 + \sqrt{\frac{\ln 2}{2} \cdot \frac{P_{eff}}{N}}
    \end{equation}

    So far, we have proven the \cref{prop:bound}.\hfill $\square$
\end{enumerate}

The term $\sqrt{\frac{P_{eff}}{N}}$ decays as the dataset size $N$ increases relative to the probe's capacity $P_{eff}$. Therefore, if the model's existing parameters do not contain any knowledge about the complex task and the sample size is sufficiently large, then the V-probing accuracy will converge to random guessing. Conversely, observing a high training accuracy implies that the hypothesis is false, i.e., the model's existing parameters must contain significant information about the target labels.

\section{Generation Settings}
\label{sec:app gen}
In this section, we detail the generation settings used to evaluate generation accuracy.

We employ vLLM \cite{kwon2023efficient} for generation, loading all models in bfloat16 precision to align with the probing settings. A context window of $32,768$ tokens is allocated to ensure sufficient space for long reasoning chains. For decoding configurations, we apply the default temperature for pre-trained models and instruction-tuned (non-reasoning) models. For reasoning models, we adopt the widely utilized configuration of temperature $0.6$ and top-p $0.95$ following \citet{guo2025deepseek, openr1}. To construct the input, we append the specific question after the problem description. To facilitate automated answer parsing, we prompt the model to enclose its final answer in ``\textbackslash\textbackslash boxed\{\}''. While most models utilize their default chat templates, for models lacking a native chat template (Llama3.1-8B-Base and Llama3.1-8B-SimpleRL) or those with limited instruction-following ability (Qwen2.5-1.5B pre-trained and SimpleRL variants), we adopt the simple chat template proposed by \citet{zeng2025simplerl}.

\section{Additional Experimental Results}
\label{sec:app result}
In this section, we present supplementary experimental results to support the main analysis. We provide the complete results for Qwen2.5-7B series, as well as the results from the replication experiments conducted on the Qwen2.5-1.5B and Llama3.1-8B series.

\subsection{Development of Initial Representation Quality}
\label{sec:exp1 more}
In this section, we present the complete experimental results of section \ref{sec:main exp1} (\cref{fig:exp1-qwen-7-all-1} and \cref{fig:exp1-qwen-7-all-2}). Additionally, we detail the replication experiments conducted on representative tasks for Llama3.1-8B series (\cref{fig:exp1-llama-8-all}, \cref{tab:llama delta comparison}) and Qwen2.5-1.5B series (\cref{fig:exp1-qwen-15-all}, \cref{tab:qwen 1.5 delta comparison}).

\begin{figure*}[htbp]
  \begin{center}
    \centerline{\includegraphics[width=\textwidth]{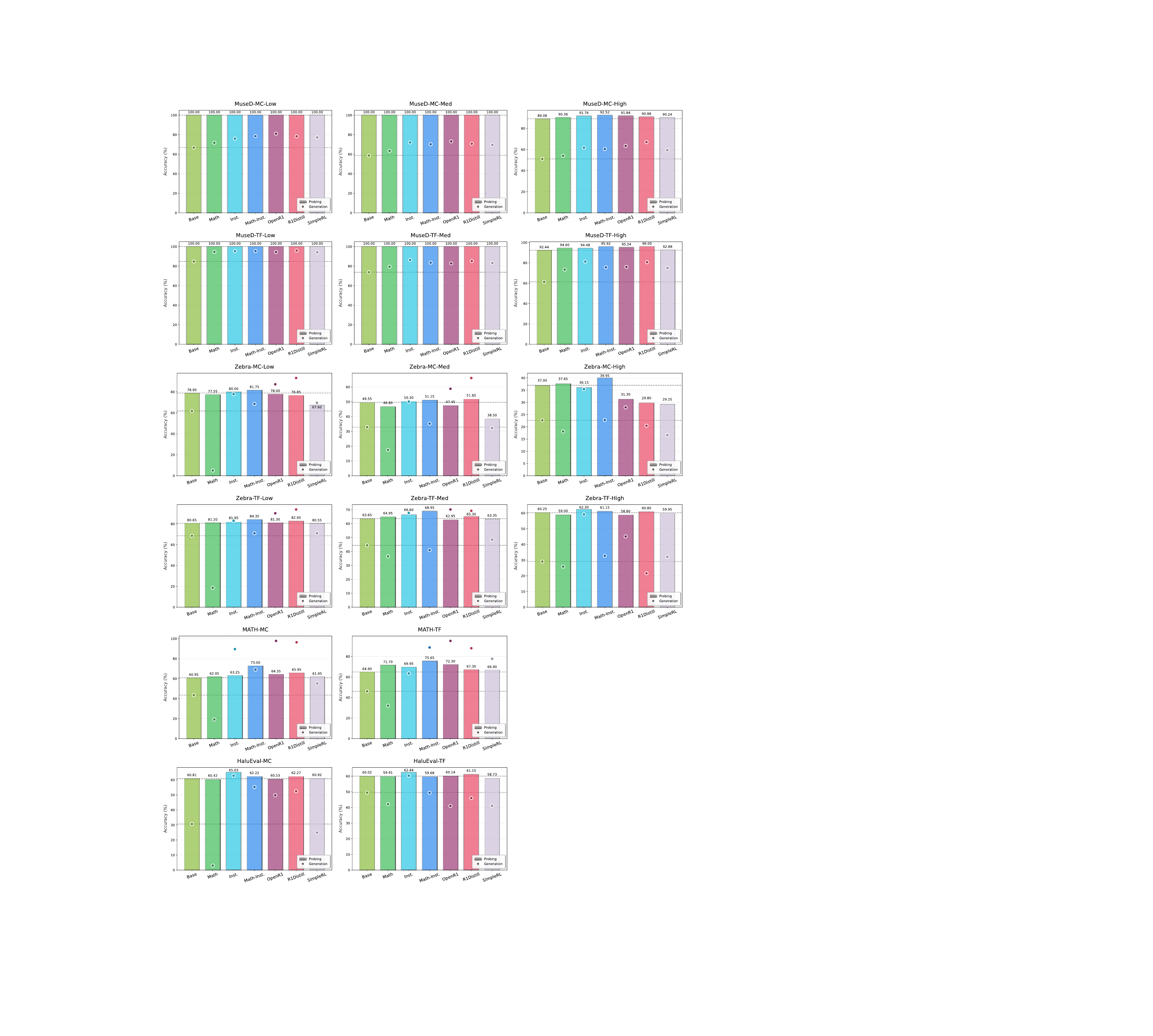}}
    \caption{
      \textbf{Qwen2.5-7B results (Part 1): Development of initial representation quality and generation accuracy across all tasks.} Gray dashed lines indicate the baseline performance of the Base model for probing and generation. 
    }
    \label{fig:exp1-qwen-7-all-1}
  \end{center}
\end{figure*}

\begin{figure*}[htbp]
  \begin{center}
    \centerline{\includegraphics[width=0.67\textwidth]{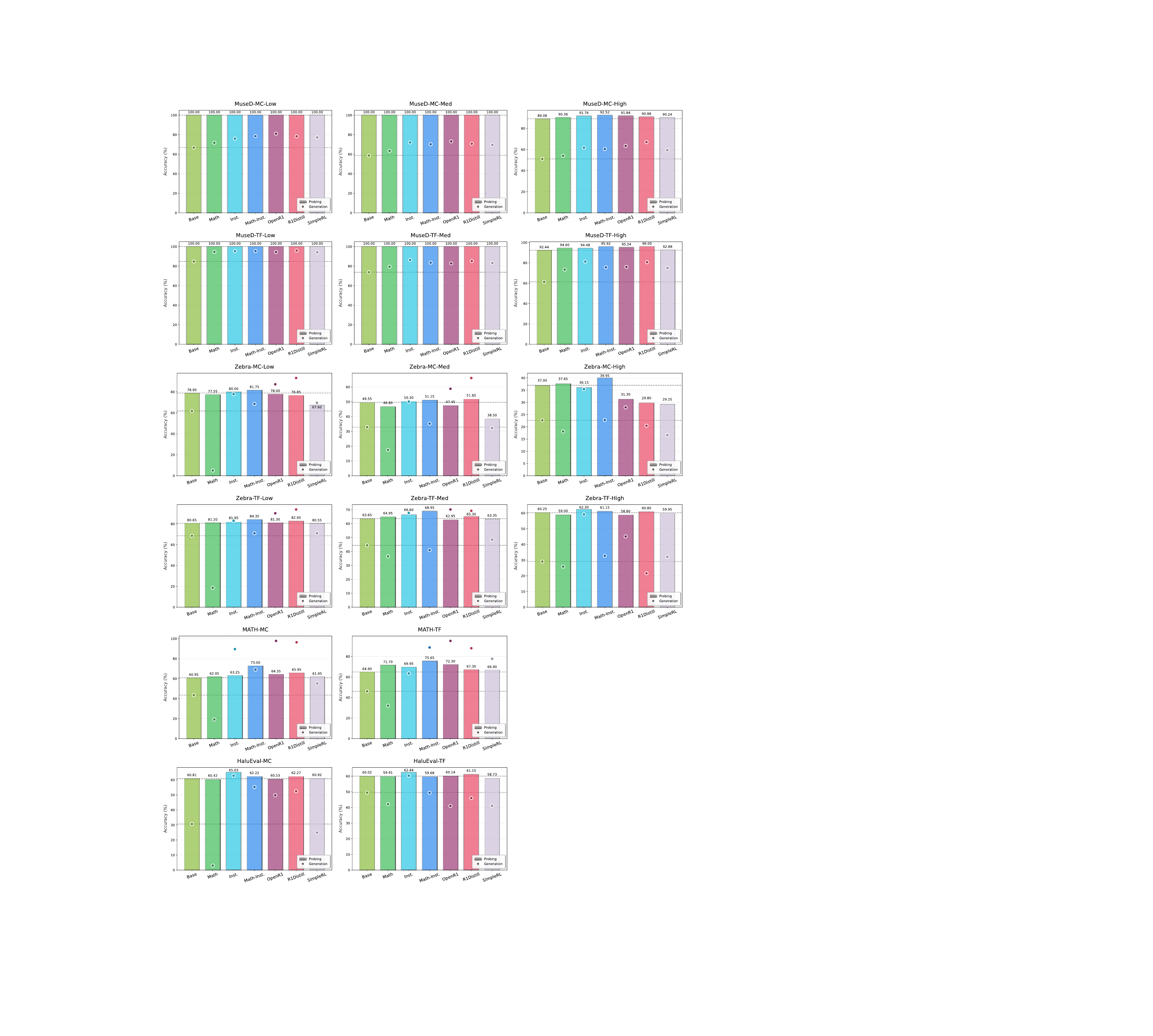}}
    \caption{
      \textbf{Qwen2.5-7B results (Part 2): Development of initial representation quality and generation accuracy across all tasks.} Gray dashed lines indicate the baseline performance of the Base model for probing and generation.
    }
    \label{fig:exp1-qwen-7-all-2}
  \end{center}
\end{figure*}

\begin{figure*}[htbp]
  \begin{center}
    \centerline{\includegraphics[width=\textwidth]{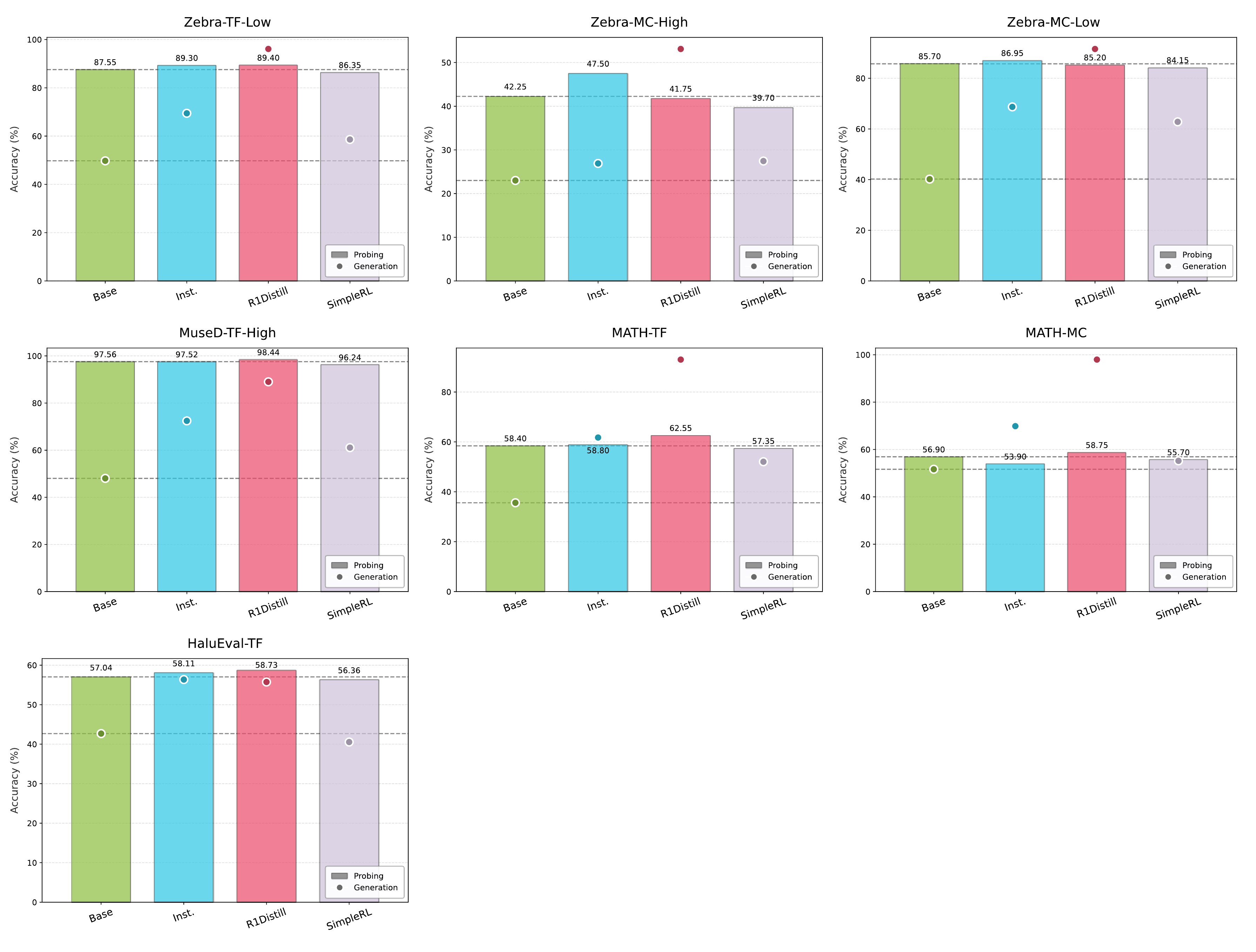}}
    \caption{
      \textbf{Llama3.1-8B results: Development of initial representation quality and generation accuracy on representative tasks.} Gray dashed lines indicate the baseline performance of the Base model for probing and generation.
    }
    \label{fig:exp1-llama-8-all}
  \end{center}
\end{figure*}

\begin{table}[htbp]
  \caption{\textbf{Llama3.1-8B results: Maximum performance gain} (Max $\Delta$) represents the highest accuracy improvement (\%) achieved by any other model over the Base model. The ratio indicates the relative scale of probing gains compared to generation gains.}
  \label{tab:llama delta comparison}
  \begin{center}
    \begin{small}
      \begin{sc}
      \resizebox{\columnwidth}{!}{
            \begin{tabular}{l|ccccccc}
            \toprule[1.5px]
            Max $\Delta$ & Zebra-TF-Low & Zebra-MC-High & Zebra-MC-Low & MuseD-TF-High  &  MATH-TF & MATH-MC & HaluEval-TF \\ \midrule
            Probing gain & 1.85 & 5.25 & 1.25 & 0.58 & 4.15 & 1.85 & 1.69 \\
            Generation gain & 46.35 & 30.10 & 51.35 & 40.96 & 57.40 & 46.30 & 13.68 \\
            \rowcolor{blue!10} \textbf{Ratio (Prob / Gen)} & \textbf{0.04} & \textbf{0.17} & \textbf{0.02} & \textbf{0.01} & \textbf{0.07} & \textbf{0.04} & \textbf{0.12} \\ 
            \bottomrule[1.5px]
            \end{tabular}
    }
    \end{sc}
    \end{small}
  \end{center}
\end{table}

\begin{figure*}[htbp]
  \begin{center}
    \centerline{\includegraphics[width=\textwidth]{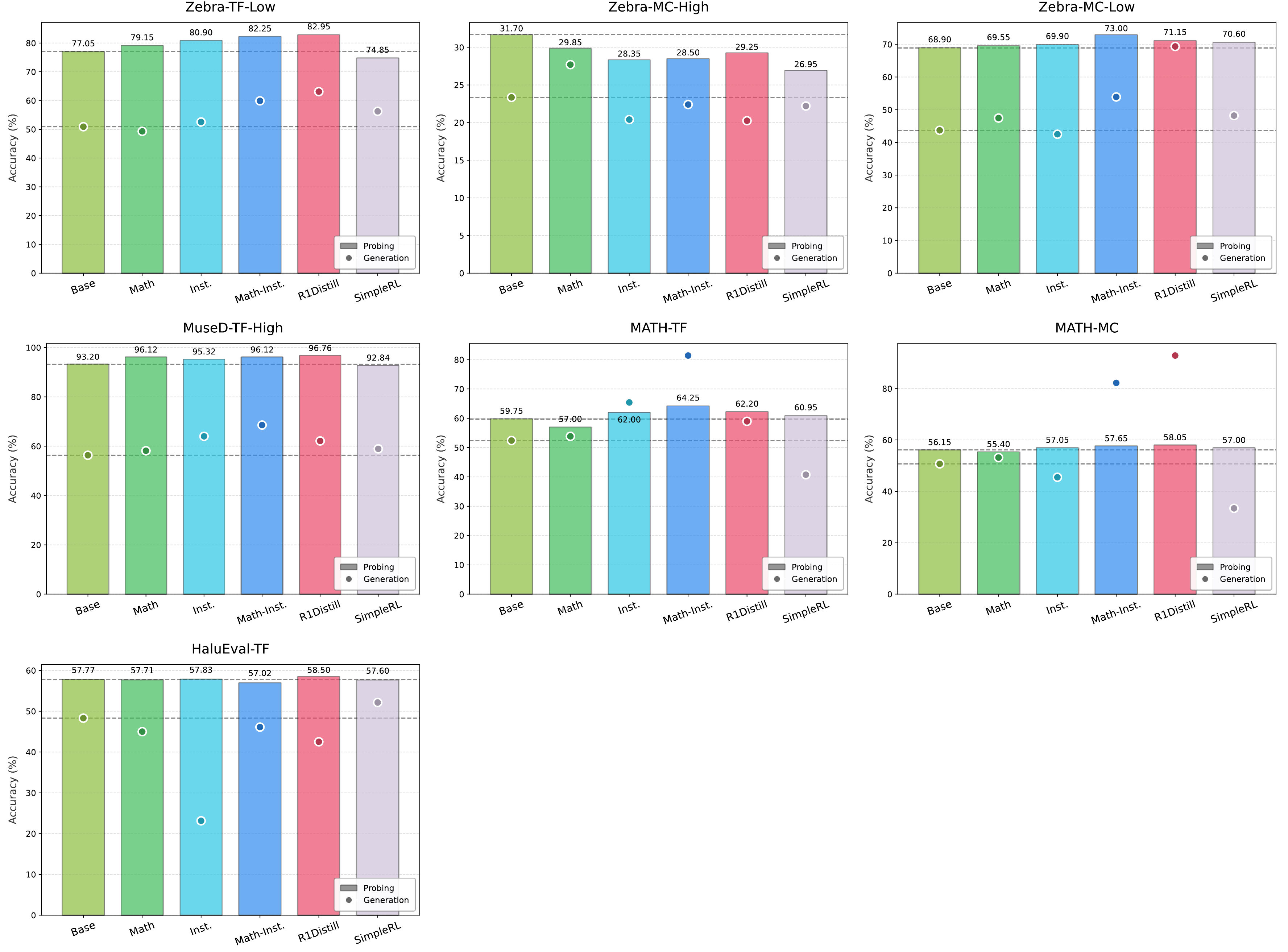}}
    \caption{
      \textbf{Qwen2.5-1.5B results: Development of initial representation quality and generation accuracy on representative tasks.} Gray dashed lines indicate the baseline performance of the Base model for probing and generation.
    }
    \label{fig:exp1-qwen-15-all}
  \end{center}
\end{figure*}

\begin{table}[htbp]
  \caption{\textbf{Qwen2.5-1.5B results: Maximum performance gain} (Max $\Delta$) represents the highest accuracy improvement (\%) achieved by any other model over the Base model. The ratio indicates the relative scale of probing gains compared to generation gains.}
  \label{tab:qwen 1.5 delta comparison}
  \begin{center}
    \begin{small}
      \begin{sc}
      \resizebox{\columnwidth}{!}{
            \begin{tabular}{l|ccccccc}
            \toprule[1.5px]
            Max $\Delta$ & Zebra-TF-Low & Zebra-MC-High & Zebra-MC-Low & MuseD-TF-High  &  MATH-TF & MATH-MC & HaluEval-TF \\ \midrule
            Probing gain & 5.90 & 0.00 & 4.10 & 3.56 & 4.50 & 1.90 & 0.73 \\
            Generation gain & 12.20 & 4.35 & 25.65 & 12.24 & 29.00 & 42.15 & 3.83 \\
            \rowcolor{blue!10} \textbf{Ratio (Prob / Gen)} & \textbf{0.48} & \textbf{0.00} & \textbf{0.16} & \textbf{0.29} & \textbf{0.16} & \textbf{0.05} & \textbf{0.19} \\ 
            \bottomrule[1.5px]
            \end{tabular}
    }
    \end{sc}
    \end{small}
  \end{center}
\end{table}

The additional results from the Llama3.1-8B and Qwen2.5-1.5B series demonstrate findings consistent with the main body. First, both model families confirm the existence of latent reasoning ability in the pre-trained models, with probing accuracies significantly above the majority guessing baseline. Second, the improvement in representation quality after post-training remains marginal, typically staying below $5\%$ across training stages. Notably, reasoning models even exhibit a degradation in representation quality after training on complex tasks (e.g., MATH and ZebraPuzzle) in Qwen2.5-7B series. Compared with maximum generation gains, the maximum improvement in probing is also extremely low, especially for Llama3.1-8B series. Nevertheless, probing accuracy exceeds generation accuracy in the majority of cases, except for highly optimized reasoning models. Finally, consistent with our main results, reasoning models trained via distillation consistently outperform the pure RL variants in both representation quality and generation accuracy.

\subsection{Representation Quality Dynamics during Generation}
\label{sec:exp2 more}
For this experiment, we select one high-difficulty subset for each task, as well as specific sub-tasks where the model's generation accuracy exceeds its initial probing accuracy. The complete results for the Qwen2.5-7B series are presented in \cref{fig:exp2-qwen-7}. 

\begin{figure*}[htbp]
  \begin{center}
    \centerline{\includegraphics[width=\textwidth]{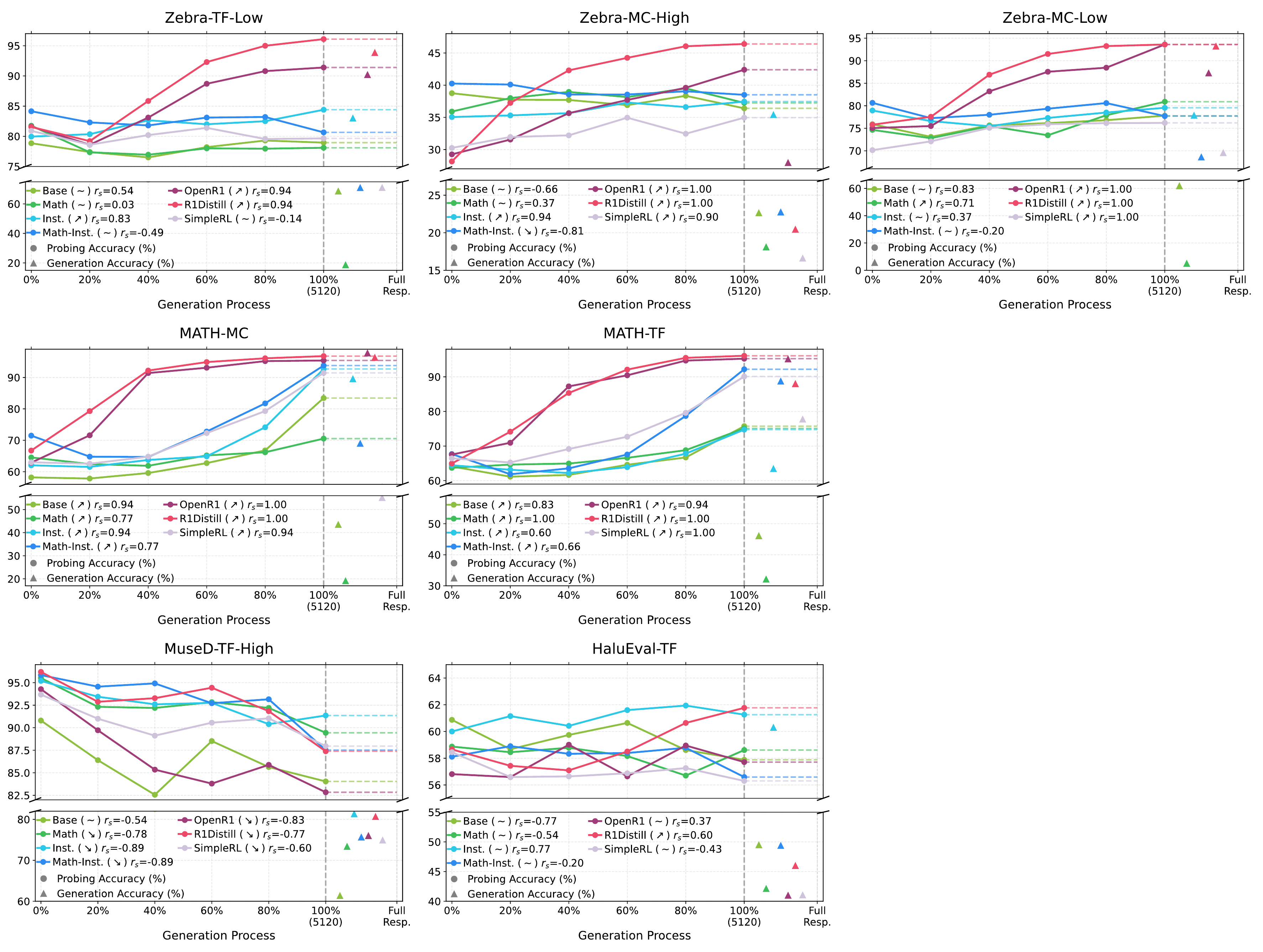}}
    \caption{
      \textbf{Qwen2.5-7B results: Representation quality dynamics during generation on all selected tasks.} Trends are analyzed using linear regression and Spearman rank correlation.
    }
    \label{fig:exp2-qwen-7}
  \end{center}
\end{figure*}

With the exception of the factual task (HaluEval) and the task where the model already possesses substantial initial representation quality (MuseD), strong reasoning models (OpenR1 and R1Distill) demonstrate a consistent, significant upward trend in representation quality during generation, ultimately achieving the highest values among all models. While non-reasoning models also show gradual improvement via CoT, the gain is markedly smaller compared to reasoning models. 

\begin{figure*}[htbp]
  \begin{center}
    \centerline{\includegraphics[width=\textwidth]{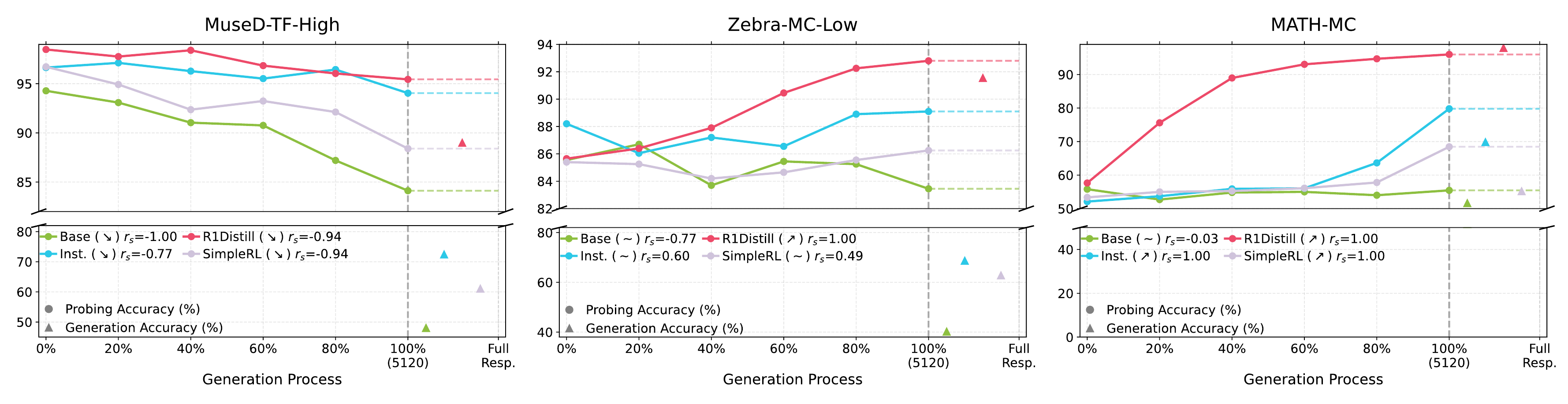}}
    \caption{
      \textbf{Llama3.1-8B results: Representation quality dynamics during generation on representative tasks.} Trends are analyzed using linear regression and Spearman rank correlation.
    }
    \label{fig:exp2-llama-8}
  \end{center}
\end{figure*}

\begin{figure*}[htbp]
  \begin{center}
    \centerline{\includegraphics[width=\textwidth]{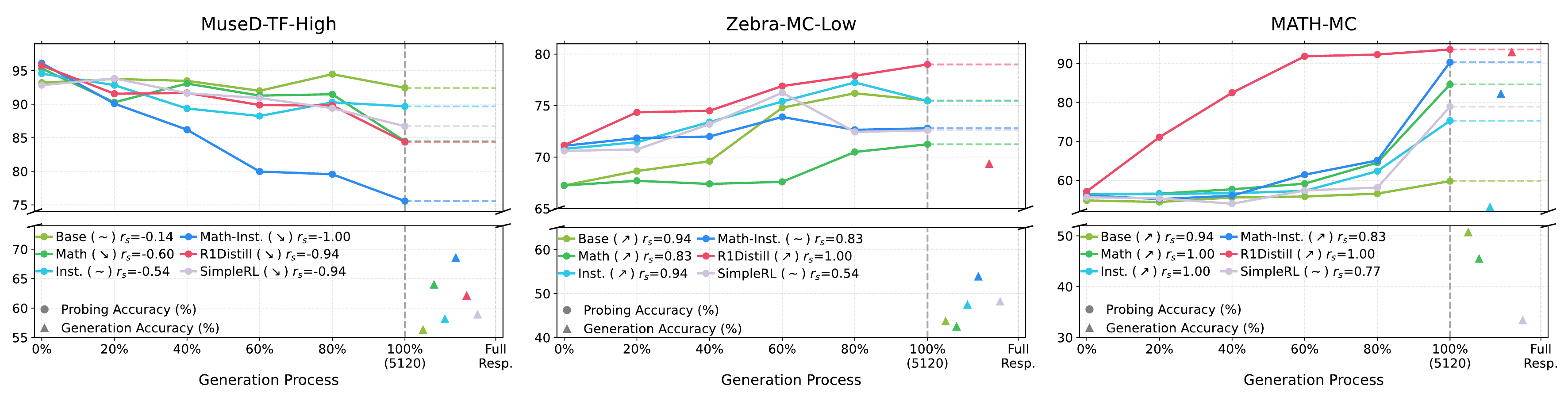}}
    \caption{
      \textbf{Qwen2.5-1.5B results: Representation quality dynamics during generation on representative tasks.} Trends are analyzed using linear regression and Spearman rank correlation.
    }
    \label{fig:exp2-qwen-1.5}
  \end{center}
\end{figure*}

We select three typical tasks where the Inst. model exhibits downward, fluctuating, and rising trends (MuseD-TF-High, Zebra-MC-Low, and MATH-MC, respectively), and replicate the experiments on the Llama3.1-8B series and Qwen2.5-1.5B series. As shown in \cref{fig:exp2-llama-8} and \cref{fig:exp2-qwen-1.5}, the observed trends remain consistent across different model series. Notably, R1Distill demonstrates the most significant upward trend and achieves the highest final representation quality on the latter two tasks.

\begin{figure*}[htbp]
  \begin{center}
    \centerline{\includegraphics[width=\textwidth]{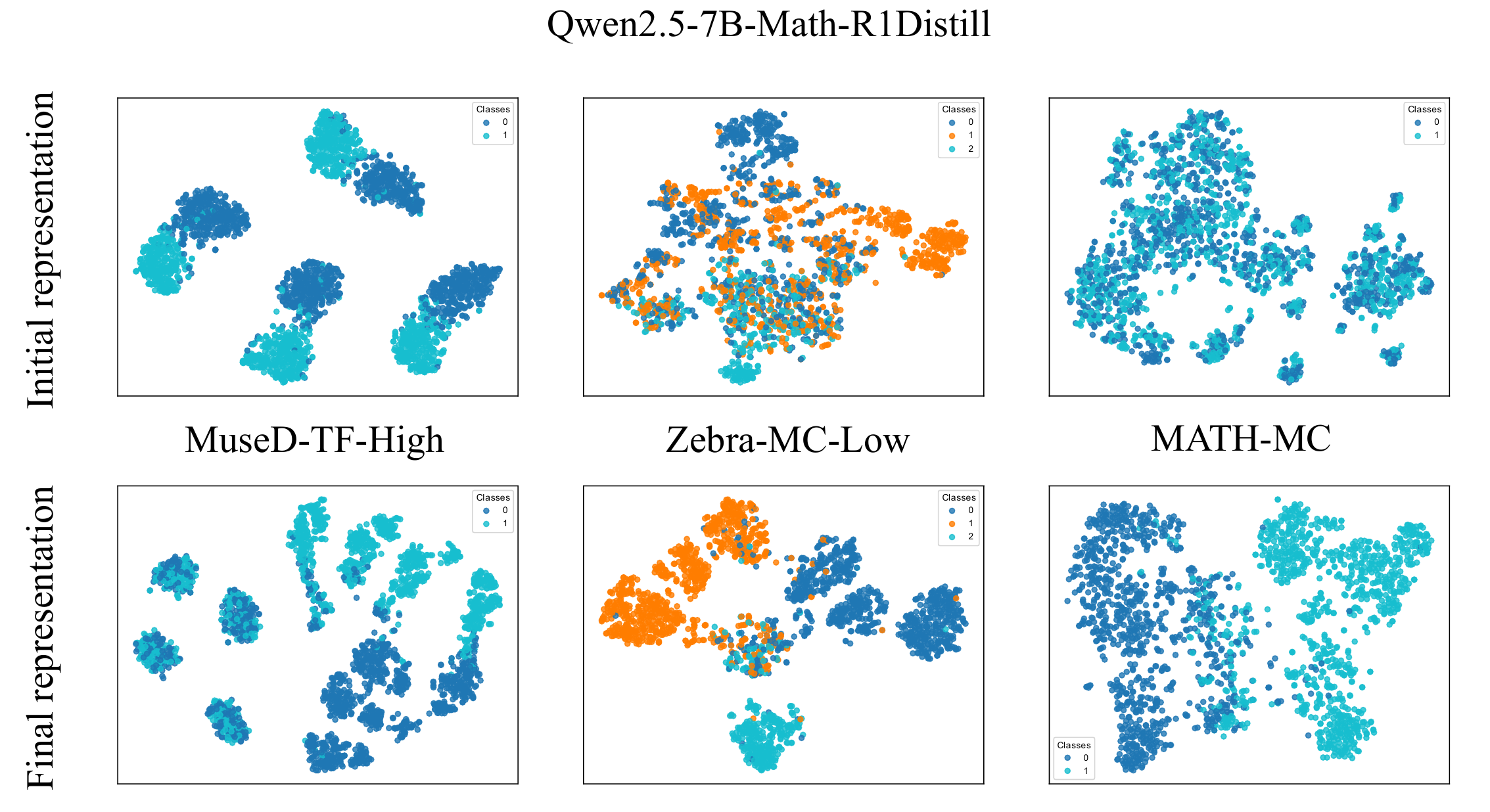}}
    \caption{
      \textbf{Visualization of Representation Shifts.} Comparison of the R1Distill model's representation distributions before (Initial) and after (Final) the CoT across three tasks.
    }
    \label{fig:repr change}
  \end{center}
\end{figure*}

To intuitively visualize how the model's internal representations change during generation, we compare the initial and final representation distributions of R1Distill using t-SNE \cite{maaten2008visualizing}. As shown in \cref{fig:repr change}, for MuseD, the initial representations already exhibit strong class separability, and the separation diminishes after CoT, indicating that the reasoning process can negatively impact internal judgment. In contrast, for Zebra and MATH, where initial representations show significant overlap between classes, the generation process yields a marked improvement in cluster separation, confirming that reasoning effectively enables the model to distinguish between different labels.

\begin{figure*}[htbp]
  \begin{center}
    \centerline{\includegraphics[width=\textwidth]{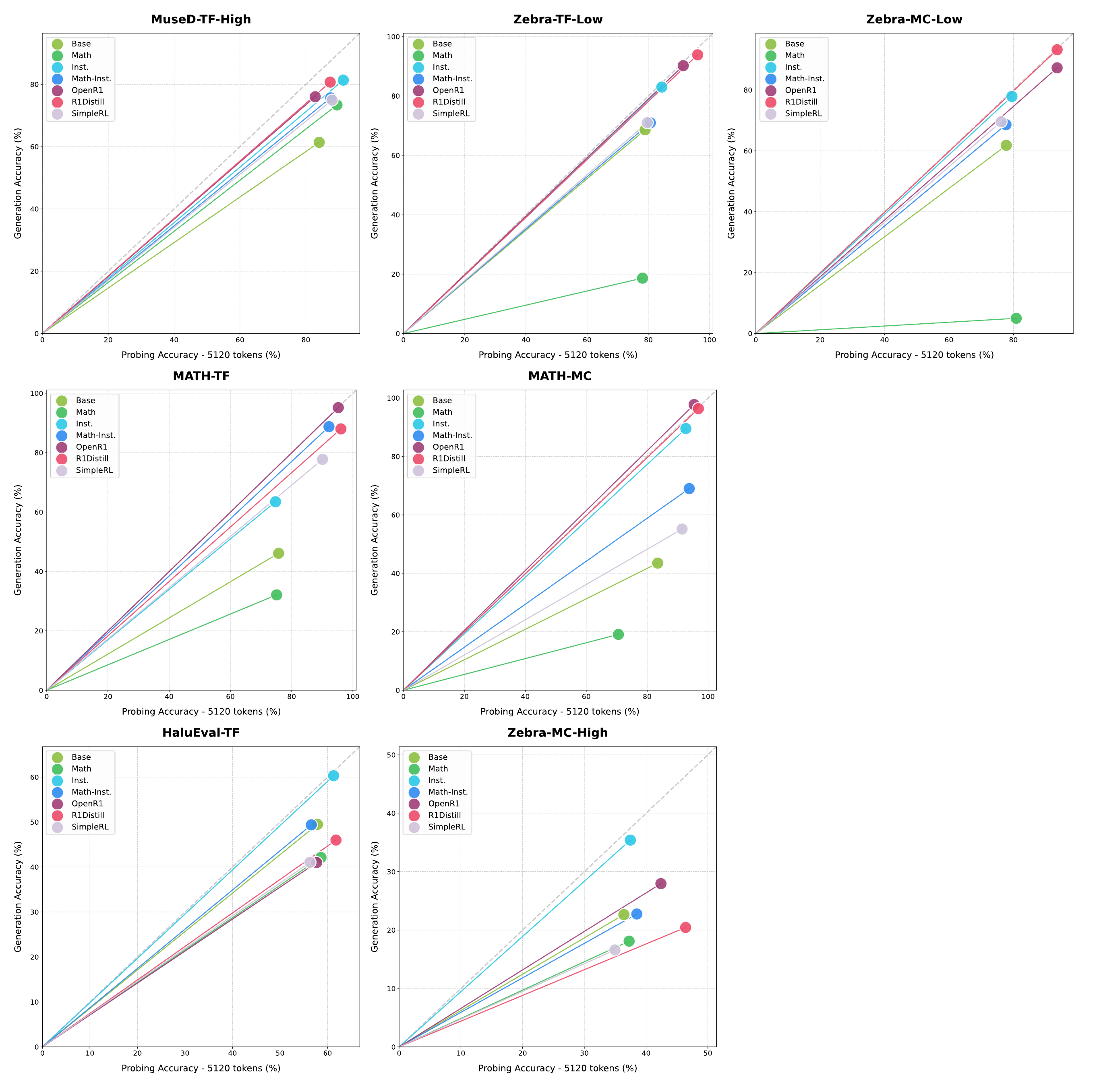}}
    \caption{
      \textbf{Qwen2.5-7B results: Generation accuracy versus probing accuracy on all selected tasks.}
    }
    \label{fig:realization-qwen-7}
  \end{center}
\end{figure*}

\begin{figure*}[htbp]
  \begin{center}
    \centerline{\includegraphics[width=\textwidth]{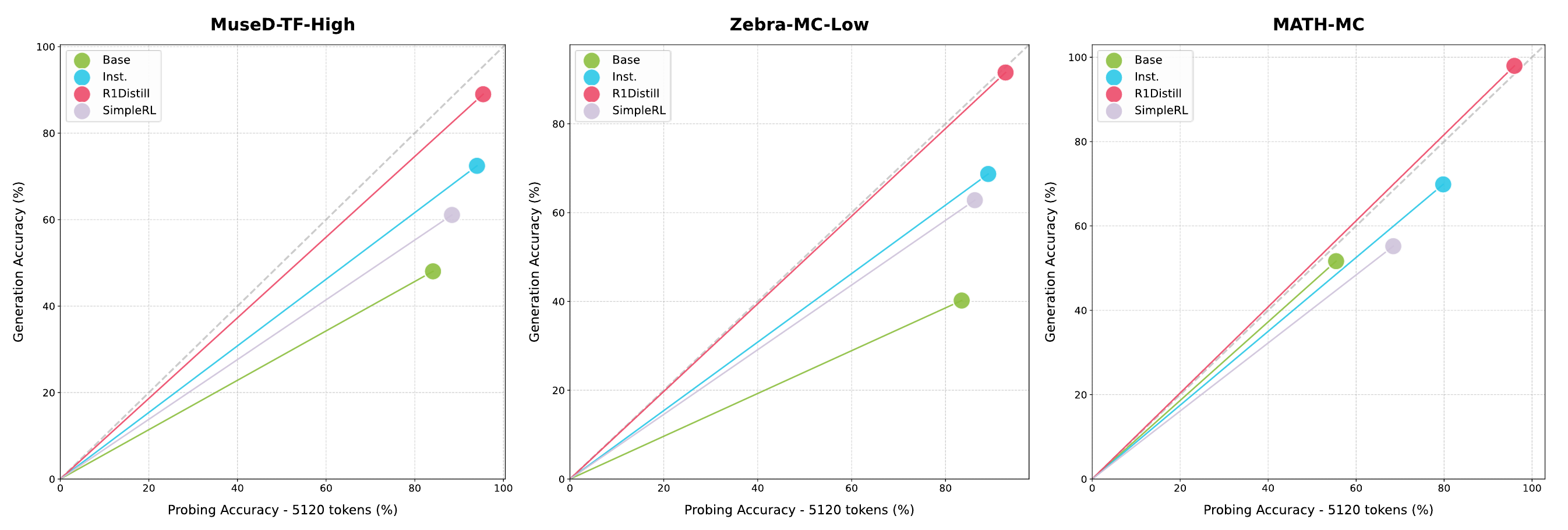}}
    \caption{
      \textbf{Llama3.1-8B results: Generation accuracy versus probing accuracy on representative tasks.}
    }
    \label{fig:realization-llama-8}
  \end{center}
\end{figure*}

\begin{figure*}[htbp]
  \begin{center}
    \centerline{\includegraphics[width=\textwidth]{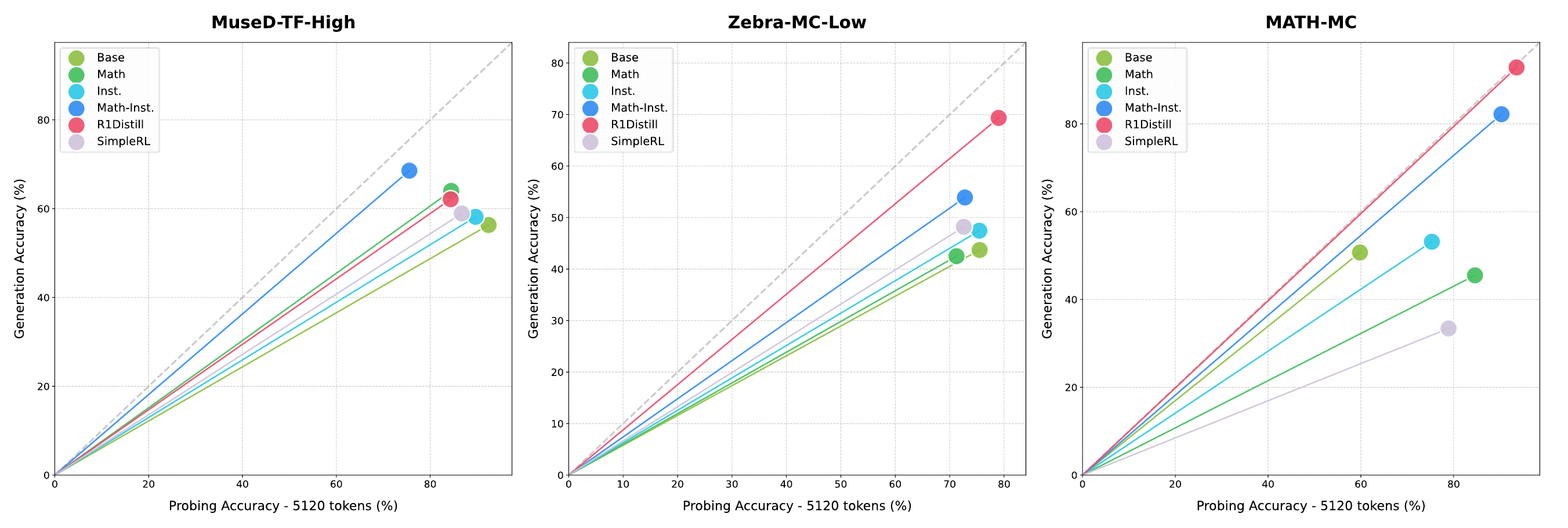}}
    \caption{
      \textbf{Qwen2.5-1.5B results: Generation accuracy versus probing accuracy on representative tasks.}
    }
    \label{fig:realization-qwen-1.5}
  \end{center}
\end{figure*}

Finally, we compare explicit generation accuracy with latent probing accuracy, as shown in \cref{fig:realization-qwen-7}, \cref{fig:realization-llama-8}, and \cref{fig:realization-qwen-1.5}. Across all models and tasks, generation accuracy consistently fails to exceed or match probing accuracy, even though probing is restricted to the first 5,120 tokens rather than the full response. This indicates that the model's latent potential remains underutilized. Post-training generally enhances the model's ability to realize this potential, as evidenced by steeper slopes compared to pre-trained models. 

\subsection{Statistical Analysis of Alignment}
\label{sec:exp3 more}
In this section, we provide the full statistical results for all reasoning tasks (High-difficulty level) in \cref{tab:repr-gen alignment all} and \cref{tab:repr-repr alignment all}. Additionally, we visualize the relationship between the initial and final probing probabilities $p$ in \cref{fig:exp3-inst} and \cref{fig:exp3-distill}. When calculating the trend based on linear regression and the Spearman rank correlation coefficient $r_s$, we exclude buckets containing fewer than $10$ data samples to ensure statistical robustness.

\begin{table*}[htbp]
  \caption{
  \textbf{Generation-representation alignment.} Significance levels $P_{MWU}$ of Mann-Whitney U test: NS denotes non-significant ($>5e^{-2}$), * ($<5e^{-2}$), and *** ($<1e^{-10}$). \textbf{Bold}: higher alignment with CoT; \textcolor{blue}{Blue}: R1Distill alignment is lower than Inst.}
  \label{tab:repr-gen alignment all}
  \begin{center}
    \begin{small}
      \begin{sc}
      \resizebox{0.9\textwidth}{!}{
            \begin{tabular}{l|cccc|cccc}
            \toprule[1.5px]
            & \multicolumn{4}{c|}{Inst.} & \multicolumn{4}{c}{R1Distill} \\
            High-Difficulty Task &   Trend    &   $r_s$$\uparrow$   &   ROC-AUC$\uparrow$   &   $P_{MWU}$   &   Trend    &   $r_s$$\uparrow$   &   ROC-AUC$\uparrow$   &   $P_{MWU}$ \\ \midrule
            MuseD-TF        &   $\nearrow$  &   0.59  &  	0.57    &   *   &   $\sim$    &   \textcolor{blue}{0.07}    &  \textcolor{blue}{0.53}     &    ns   \\ \midrule
            \textbf{MuseD-TF+CoT}    &  $\nearrow$   &   \textbf{0.99}   &    \textbf{0.85}  &  ***    &   $\nearrow$    &   \textcolor{blue}{\textbf{0.95}}    &  \textbf{0.88}     &    ***   \\ \midrule
            MuseD-MC  & $\sim$ & 0.49 & 0.57 & * & $\sim$ & \textcolor{blue}{0.28} & 0.57 & * \\ \midrule
            \textbf{MuseD-MC+CoT} & $\nearrow$ & \textbf{0.90} & \textbf{0.78} & *** & $\nearrow$ & \textbf{0.96} & \textcolor{blue}{\textbf{0.78}} & *** \\ \midrule
            Zebra-TF & $\nearrow$ & 0.88 & 0.55 & * & $\sim$ & \textcolor{blue}{0.07} & \textcolor{blue}{0.51} & ns \\ \midrule
            \textbf{Zebra-TF+CoT} & $\nearrow$ & \textbf{1.00} & \textbf{0.86} & *** & $\nearrow$ & \textcolor{blue}{\textbf{0.76}} & \textcolor{blue}{\textbf{0.62}} & *** \\ \midrule
            Zebra-MC        &   $\nearrow$   &  1.00    &   0.61   &  ***    &     $\nearrow$  &   1.00    &   \textcolor{blue}{0.60}    &    *   \\ \midrule
            \textbf{Zebra-MC+CoT}    &   $\nearrow$   &  0.99    &    \textbf{0.94}  &   ***   &   $\nearrow$    &   \textcolor{blue}{0.98}    &   \textcolor{blue}{\textbf{0.76}}    &   ***   \\ \midrule
            HaluEval-TF     &   $\sim$   &   0.45   &   0.56   &   *   &   $\sim$    &   \textcolor{blue}{0.21}    &    \textcolor{blue}{0.50}   &    ns   \\ \midrule
            MATH-TF         &   $\nearrow$   &   0.88   &   0.59   &   *   &  $\sim$    &   \textcolor{blue}{0.45}    &    \textcolor{blue}{0.57} &      * \\ \midrule
            \textbf{MATH-TF+CoT}     &   $\nearrow$   &   \textbf{0.98}   &  \textbf{0.70}   &  ***    &   $\nearrow$    &   \textcolor{blue}{\textbf{0.52}}    &   \textbf{0.90}    &   ***   \\ \midrule
            MATH-MC & $\sim$ & -0.16 & 0.50 & ns & $\searrow$ & \textcolor{blue}{-0.60} & \textcolor{blue}{0.46} & ns \\ \midrule
            \textbf{MATH-MC+CoT} & $\nearrow$ & \textbf{0.95} & \textbf{0.95} & *** & $\sim$ & \textcolor{blue}{\textbf{0.43}} & \textcolor{blue}{\textbf{0.85}} & *** \\
            \bottomrule[1.5px]
            \end{tabular}
    }
      \end{sc}
    \end{small}
  \end{center}
\end{table*}

\begin{table}[htbp]
  \caption{\textbf{Alignment between representations before and after CoT}. $r_s$: Spearman rank correlation coefficient; $r_p$: Pearson correlation coefficient; $r^2$ is adopted as the primary metric of linear regression, as $p$-values tend to become uninformative in large datasets. \textcolor{blue}{Blue}: R1Distill alignment is lower than Inst.}
  \label{tab:repr-repr alignment all}
  \begin{center}
    \begin{small}
      \begin{sc}
      \resizebox{0.5\columnwidth}{!}{
            \begin{tabular}{l|ccc|ccc}
            \toprule[1.5px]
            & \multicolumn{3}{c|}{Inst.} & \multicolumn{3}{c}{R1Distill} \\
            &   $r_s$$\uparrow$    &   $r_p$$\uparrow$   &   $r^2$$\uparrow$   &   $r_s$$\uparrow$    &   $r_p$$\uparrow$   &   $r^2$$\uparrow$ \\ \midrule
            MuseD-TF-High & 0.16 & 0.13 & 0.02 & \textcolor{blue}{0.10} & \textcolor{blue}{0.05} & \textcolor{blue}{0.00}  \\ \midrule
            MuseD-MC-High & 0.36 & 0.30 & 0.09 & \textcolor{blue}{0.25} & \textcolor{blue}{0.21} & \textcolor{blue}{0.04} \\ \midrule
            Zebra-TF-High & 0.30 & 0.30 & 0.09 & \textcolor{blue}{0.29} & \textcolor{blue}{0.29} & \textcolor{blue}{0.08} \\ \midrule
            Zebra-MC-High & 0.23 & 0.32 & 0.10 & \textcolor{blue}{0.18} & \textcolor{blue}{0.15} & \textcolor{blue}{0.02} \\ \midrule
            MATH-TF & 0.28 & 0.29 & 0.08 & \textcolor{blue}{0.09} & \textcolor{blue}{0.05} & \textcolor{blue}{0.00} \\ \midrule
            MATH-MC & 0.13 & 0.07 & 0.01 & \textcolor{blue}{0.08} & \textcolor{blue}{0.01} & \textcolor{blue}{0.00} \\
            \bottomrule[1.5px]
            \end{tabular}
    }
    \end{sc}
    \end{small}
  \end{center}
\end{table}

Consistent with the results in section \ref{sec:more exp1}, generation correctness exhibits minimal correlation with the initial representation but significantly higher correlation with the final representation. Notably, R1Distill generally shows weaker alignment than Inst. Furthermore, the correlation between initial and final representations is low for both models, with R1Distill demonstrating an even lower degree of alignment.

\begin{figure*}[htbp]
  \begin{center}
    \centerline{\includegraphics[width=\textwidth]{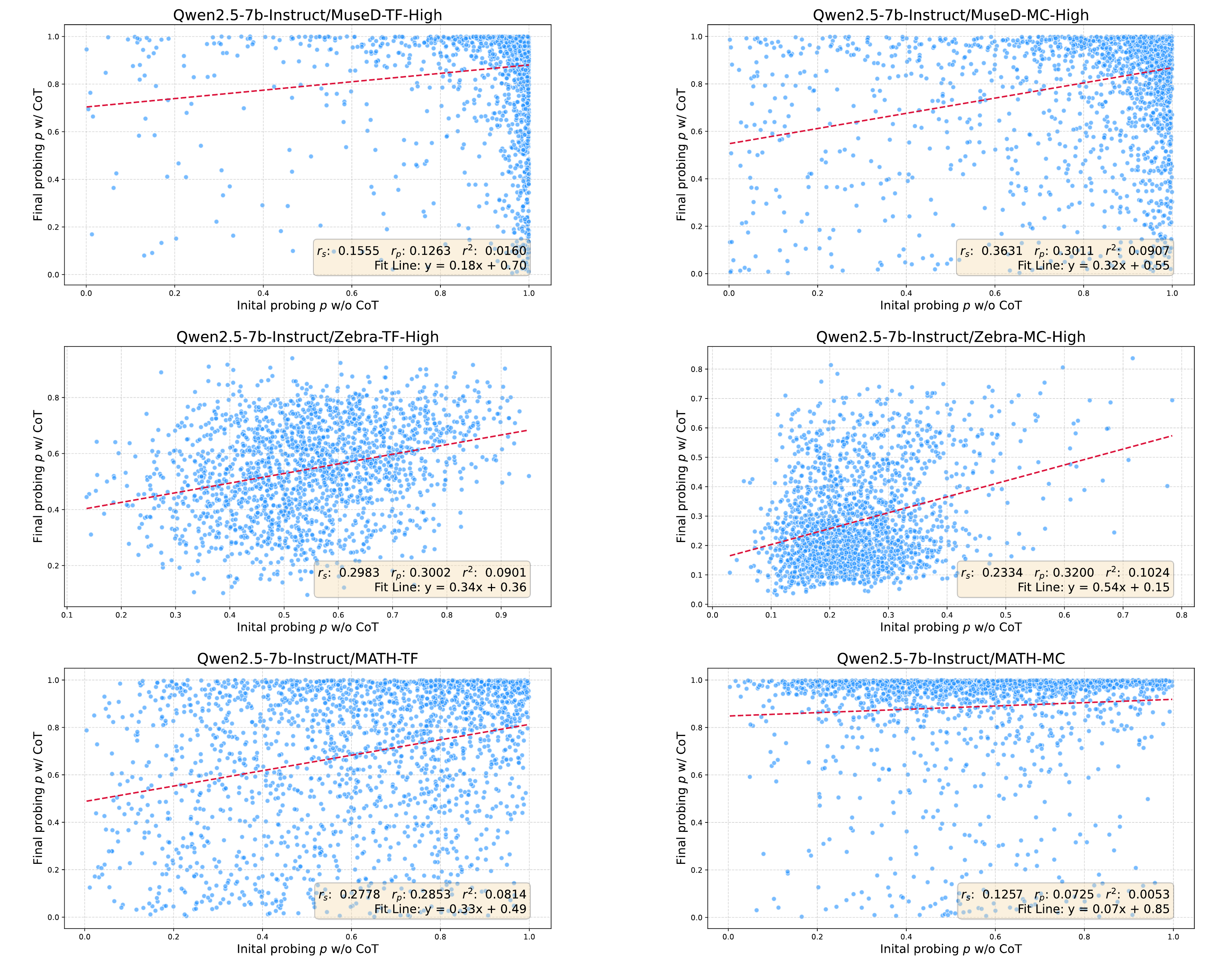}}
    \caption{
      \textbf{Qwen2.5-7B-Instruct: Visualization of the relationship between initial and final probing probabilities $p$.}
    }
    \label{fig:exp3-inst}
  \end{center}
\end{figure*}

\begin{figure*}[htbp]
  \begin{center}
    \centerline{\includegraphics[width=\textwidth]{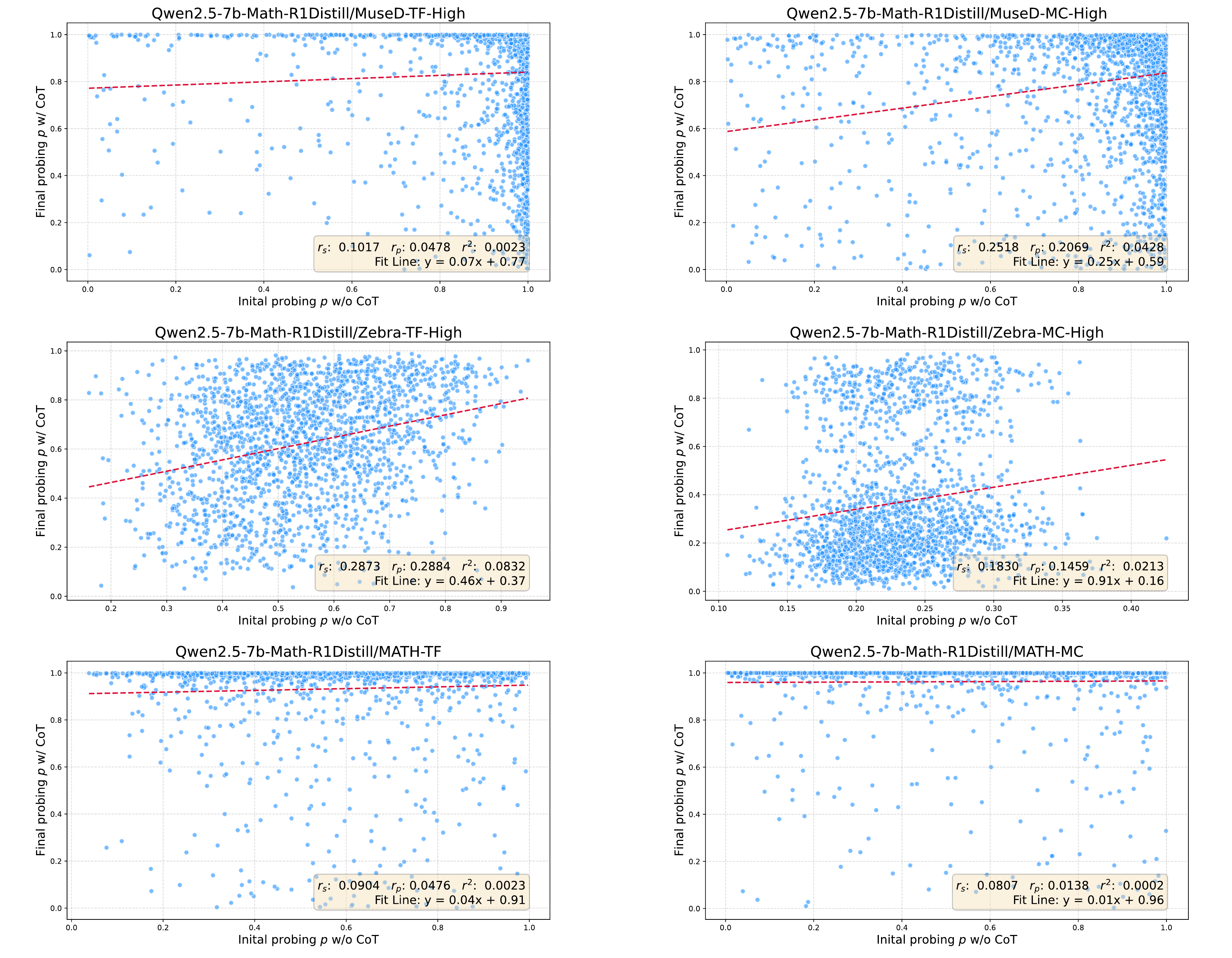}}
    \caption{
      \textbf{Qwen2.5-7B-Math-R1Distill: Visualization of the relationship between initial and final probing probabilities $p$.}
    }
    \label{fig:exp3-distill}
  \end{center}
\end{figure*}

The visualizations in \cref{fig:exp3-inst} and \cref{fig:exp3-distill} further demonstrate the drastic changes occurring between the initial and final representations.

\clearpage




\end{document}